\newcommand{\multiline}[1]{%
  \begin{tabularx}{\dimexpr\linewidth-\ALG@thistlm}[t]{@{}X@{}}
    #1
  \end{tabularx}
}
\newtheorem{theorem}{Theorem}
\newtheorem{lemma}{Lemma}
\newtheorem{assumption}{Assumption}
\newtheorem{definition}{Definition}
\newtheorem{proposition}[theorem]{Proposition}
\newtheorem{remark}{Remark}
\DeclareMathOperator*{\argmax}{\arg\!\max}
\newcommand{\mc}{\mathcal}
\newcommand{\mcx}{\mathcal{X}}
\newcommand{\mca}{\mathcal{A}}
\newcommand{\mbr}{\mathbb{R}}
\newcommand{\mbe}{\mathbb{E}}
\newcommand{\mbi}{\mathbb{I}}
\newcommand{\mcp}{\mathcal{P}}
\newcommand{\eps}{\epsilon}
\newcommand{\wto}{\widetilde{O}}
\title{\LARGE \bf
Online Reinforcement Learning in Markov Decision Process\\ Using Linear Programming
}
\author{Vincent Leon and S. Rasoul Etesami
\thanks{Vincent Leon and S. Rasoul Etesami are with Department of Industrial and Enterprise Systems Engineering and Coordinated Science Laboratory, University of Illinois at Urbana-Champaign, Urbana, IL, 61801, USA. 
Email: (leon18, etesami1)@illinois.edu.}
\thanks{
This material is based upon work supported by the Air Force Office of Scientific Research under award number FA9550-23-1-0107 and the NSF CAREER Award under grant number EPCN-1944403. 
A shorter version of this paper has been presented in the 62nd IEEE Conference on Decision and Control (CDC), Dec. 13-15, 2023, Singapore and published in its proceedings \cite{leon2023online}.}
}
\begin{document}

\maketitle
\thispagestyle{plain}
\pagestyle{plain}

\begin{abstract}
We consider online reinforcement learning in episodic Markov decision process (MDP) with unknown transition function and stochastic rewards drawn from some fixed but unknown distribution. The learner aims to learn the optimal policy and minimize their regret over a finite time horizon through interacting with the environment. We devise a simple and efficient model-based algorithm that achieves $\wto(LX\sqrt{TA})$ regret with high probability, where $L$ is the episode length, $T$ is the number of episodes, and $X$ and $A$ are the cardinalities of the state space and the action space, respectively. The proposed algorithm, which is based on the concept of ``optimism in the face of uncertainty", maintains confidence sets of transition and reward functions and uses occupancy measures to connect the online MDP with linear programming. It achieves a tighter regret bound compared to the existing works that use a similar confidence set framework and improves computational effort compared to those that use a different framework but with a slightly tighter regret bound.
\end{abstract}

{\bf \emph{--Keywords:}
Markov decision process, online learning, reinforcement learning, regret minimization, occupation measure, linear programming.
}

\section{Introduction}\label{sec:intro}

Sequential decision-making in an unknown environment is one of the central problems in many disciplines, such as control and game theory, operations research, and computer science \cite{altman1999constrained,bertsekas2012dynamic,gabel2007successful,vamvoudakis2021handbook}. In that regard, Markov decision process (MDP) provides a powerful paradigm for modeling and analyzing sequential decision-making in the face of uncertainty. In a standard discrete-time MDP, the system is in a certain state at each time, and the decision-maker has to choose an action based on the current state. Depending on the action taken, the decision-maker receives a reward from the environment, and the system transitions to the next state stochastically based on some state transition function. 
In reality, the decision-maker often may not have information about the underlying dynamics of the system and must learn them in real-time by executing different policies while interacting with the environment.
Such problems are referred to as \emph{online reinforcement learning}, and the decision-maker is called the learner or the player. MDPs and reinforcement learning have emerged in many applications such as robotic control \cite{kober2013reinforcement,polydoros2017survey}, game playing \cite{zhang2021multi}, partially observable optimal feedback control \cite{khan2012reinforcement,kiumarsi2018optimal}, cyber-physical security \cite{tran2019safety,etesami2019dynamic}, healthcare  \cite{coronato2020reinforcement}, and among many others \cite{vamvoudakis2021handbook,wang2020reinforcement}. 

In this paper, we study online reinforcement learning in episodic MDPs, where the learner's interaction with the environment proceeds in repeated episodes. In other words, the system ``restarts'' or gets to some initial state after a time period (episode), and learning takes place between episodes. This is a realistic model for many real scenarios. For example, each round of playing chess with an opponent can be viewed as an episode, and each episode contains multiple time steps for moving the chess pieces. Therefore, a chess player learns winning strategies after each round of the game and becomes more expert by means of rounds of play and learning. 

Specifically, we consider an episodic MDP with finite $X$ states, $A$ actions, $T$ episodes, and $L$ steps in each episode. We devise a learning algorithm based on the optimism principle, which comprises two parts: model estimation and policy optimization. The first part estimates the model of MDP, and the second part optimizes the policy for implementation based on the estimated model. Our benchmark to measure the algorithm performance is \emph{regret}, which is defined as the difference between the maximum expected cumulative reward that one can obtain given the MDP model and the expected cumulative reward returned by the algorithm (see Section \ref{sec:prob} for a formal definition). The objective is to design a tractable reinforcement learning algorithm that achieves a regret sublinear in $T$ and polynomial in other parameters.

\subsection{Related Works}\label{subsec:lit}

The concept of ``optimism in the face of uncertainty'' is a generally well-understood and strategic principle for decision-making over a fixed time horizon. It is widely applied in multi-armed bandit and reinforcement learning problems. Under such a principle, the algorithm maintains a plausible estimate of the model and chooses the action that yields the most optimistic reward.
Earlier works such as UCRL algorithm of \cite{auer2006logarithmic}, UCRL2 algorithm of \cite{jaksch2010near-optimal}, and REGAL algorithm of \cite{bartlett2009regularization} apply this principle and study a similar MDP model with unknown environment dynamics where the reward is randomly drawn from some unknown fixed distribution. 
These algorithms are composed of two parts: model estimation and policy optimization. UCRL2 algorithm achieves a regret bound of $\wto(L^{\frac{3}{2}}X\sqrt{TA})$. UCBVI algorithm for episodic MDP, which is introduced in \cite{azar17minimax}, improves this bound to $\wto(L\sqrt{TXA})$ and achieves the best-known regret bound so far.
UCBVI algorithm adopts a different optimism approach for reinforcement learning, combining the upper confidence bound (UCB) approach with a Q-learning-type algorithm. 
In a more recent work \cite{jin2018q-learning}, the authors propose a variant of model-free Q-learning algorithm that directly parameterizes and updates the value functions without explicitly modeling the environment. The Q-learning algorithm is combined with the UCB algorithm and achieves a regret of $\wto(L^2\sqrt{TXA})$. Another notable algorithm-design principle is posterior sampling, a.k.a. Thompson Sampling. In \cite{agrawal2017optimistic}, the authors present a posterior-sampling-based learning algorithm that achieves a regret bound of $\wto(L\sqrt{TXA})$.

Recently, several works have extended the approach of confidence sets to adversarial MDPs. In adversarial MDP, the reward functions can change arbitrarily between episodes as if they were chosen by an adversary, whereas the environment dynamics still remain fixed. 
\cite{zimin2013online,rosenberg2019onlineconvex,jin2020learning} tackle the reinforcement learning in episodic adversarial MDPs and apply online mirror descent on the space of occupancy measures instead of optimizing policies directly. 
Inspired by UC-O-REPS and UOB-REPS algorithms in \cite{rosenberg2019onlineconvex} and \cite{jin2020learning} respectively, we adopt the concept of occupancy measure in our paper for non-adversarial setting.

\subsection{Contributions}

In this work, we devise a reinforcement learning algorithm for episodic MDPs which achieves a regret bound of $\wto(LX\sqrt{TA})$ and requires at most $XA \log T$ times of model update and optimization by combining UCRL2 algorithm and UOB-REPS algorithm and extending their analysis. 
Moreover, the optimization problem for policy update is essentially a linear program and can be solved efficiently using standard linear programming solvers. Our contribution is twofold:
\begin{itemize}
    \item Compared to the best regret bound of existing algorithms that adopt the same method based on confidence set, we reduce the regret bound from $\wto(L^{\frac{3}{2}}X\sqrt{TA})$\footnote{We have translated the bounds into our notations and model defined in Section \ref{sec:prob}. Therefore, the bounds shown here differ slightly from how they appear in the original paper \cite{jaksch2010near-optimal}.} in \cite{jaksch2010near-optimal} to $\wto(LX\sqrt{TA})$. 

    \item Compared to the best regret bound of all existing algorithms regardless of methods or principles, which is $\wto(L\sqrt{TXA})$ in \cite{azar17minimax}, we have improved the computational efficiency by reducing the number of steps for model estimation and policy update from $T$ to at most $XA \log T$ at the cost of an additional factor of $\sqrt{X}$ in the regret bound. 

\end{itemize}

\noindent
\textbf{Notations:} 
For any non-negative integer $k$, we let $[k] \triangleq \{0, 1, \cdots, k\}$. For any positive integer $k$, we let $[k]_+ \triangleq \{1, \cdots, k\}$. For any finite set $A$, $\Delta(A)$ denotes the probability simplex of $A$. $\mbi\{\cdot\}$ denotes the indicator function. We use $\Omega$ and $O$ to denote the asymptotic lower and upper bounds, respectively, and use $\wto$ to represent $O$ by hiding the logarithmic factors. We use calligraphic and uppercase letters to denote sets and their cardinalities, respectively (e.g., $X = |\mcx|$, $X_h = |\mcx_h|$, $A = |\mca|$).

\section{Problem Formulation}\label{sec:prob}

We study an episodic discrete-time MDP defined by a tuple $\mc{M} = (\mcx, \mca, P, r)$, where $\mcx$ is the finite state space, $\mca$ is the finite action space, $P: \mcx \times \mca \times \mcx \to [0,1]$ is the transition function, and $r: \mcx \times \mca \to [0,1]$ is the expected reward function (shortened as reward function hereinafter). When the learner takes action $a \in \mca$ in state $x \in \mcx$, the system transitions to state $x' \in \mcx$ with probability $P(x'|x,a)$, and the learner receives a realized reward randomly drawn from some distribution on $[0,1]$ with mean $r(x,a)$. The process proceeds in episodes, where each episode has the same length of $L$ time steps. The system restarts at the end of each episode, and a new episode begins with an initial state. Moreover, following the previous literature \cite{neu2012adversarial, zimin2013online, rosenberg2019onlineconvex,rosenberg2019onlinestochastic,jin2020learning}, we shall focus on the \emph{layered episodic MDP} model as described in the following assumption.
\begin{assumption} \label{as:layered-mdp}
The episodic MDP model satisfies the following conditions: 
\begin{itemize}
    \item The state space $\mcx$ can be partitioned into $(L+1)$ non-intersecting layers $\mcx_0, \cdots, \mcx_L$.
    \item $\mcx_0$ and $\mcx_L$ are singletons, i.e., $\mcx_0 = \{x_0\}$, $\mcx_L = \{x_L\}$.
    \item State transition takes place only between two consecutive layers, i.e., $P(x'|x,a)>0$ only if $x \in \mcx_h$ and $x' \in \mcx_{h+1}$ for some $h \in [L-1]$.
    \item The transition function $P$ and the reward function $r$ are time-invariant.
\end{itemize}
\end{assumption}

\begin{remark}
In fact, restriction to the class of layered episodic MDPs is without loss of generality because any general setting of episodic MDP can be reduced to such a layered structure \cite{rosenberg2019onlineconvex, jin2020learning}. The reason is that non-layered structures with time-variant transition and reward functions can be reduced to layered models by creating $L$ identical copies of the state space to construct the layers. Therefore, the algorithm and results in this paper can be applied to any general episodic MDP model up to a scaling factor that depends on the length of each episode.
\end{remark}


The learner interacts with the environment over a fixed finite time horizon $T$ and follows a learning algorithm to learn the optimal policy for the MDP. Only the state space $\mcx$ and the action space $\mca$ are known to the learner ahead of time, whereas the transition function $P$ and the reward function $r$ remain unknown. In episode $t \in [T]_+$, the learner determines a stochastic policy $\pi_t: \mcx \to \Delta(\mca)$ where $\pi_t(a|x)$ denotes the probability of choosing $a$ in state $x$ and executes the policy $\pi_t$ over the entire episode. In each step $h \in [L-1]$, the learner chooses an action $a_h \sim \pi_t(\cdot|x_h)$ and receives a realized reward $r_h$ which is randomly drawn from some unknown distribution with mean $r(x_h, a_h)$. Such information feedback is often referred to as the \emph{bandit feedback}.

For any policy $\pi$, we define the expected reward over an episode as follows:
\begin{equation}
    V(\pi) \triangleq \mbe\left[ \sum_{h=0}^{L-1} r(x_h, a_h) \Bigg | P, \pi \right].
\end{equation}
An optimal policy is the policy $\pi^*$ which maximizes the expected reward over an episode, i.e., $\pi^* \in \argmax_{\pi} V(\pi)$. The objective of the learner is to minimize the \emph{total regret}, which is defined as 
\begin{equation} \label{eq:regret}
    \texttt{Reg}_T = T \cdot V(\pi^*) - \sum_{t=1}^T V(\pi_t).
\end{equation}

\section{Preliminaries: Occupancy Measure in Episodic MDP} \label{subsec:prelim}

In this section, we provide some preliminary results and analytical tools that will leverage to establish our main results. We begin with the notion of occupancy measure that was introduced in \cite{altman1999book} and extensively used for online learning in MDPs  \cite{neu2012adversarial,zimin2013online,rosenberg2019onlineconvex,rosenberg2019onlinestochastic,jin2020learning}. 

\begin{definition}
Given a layered episodic MDP, the occupancy measure $q^{P,\pi}: \mcx \times \mca \times \mcx \to [0,1]$ induced by transition function $P$ and policy $\pi$ is defined as 
\begin{equation} \label{eq:def-occ-mea}
    q^{P,\pi}(x, a, x') = \mathbb{P}\{x_{h(x)} = x, a_{h(x)} = a, x_{h(x)+1} = x' | P, \pi\},
\end{equation}
where $h(x)$ is the index of the layer to which state $x$ belongs. 
\end{definition}

In other words, $q^{P,\pi}(x, a, x')$ represents the probability of visiting the state-action-state triple $(x, a, x')$ in the MDP with transition function $P$ under policy $\pi$. Next, let us define $Q$ to be the set of all valid occupancy measures that can be induced by some arbitrary transition function $P$ and some policy $\pi$. We have the following statement about $Q$.

\begin{proposition}[\cite{rosenberg2019onlineconvex,rosenberg2019onlinestochastic,jin2020learning}]
$Q$ is a non-empty polytope and has the following representation: 
\begin{align}
    Q = & \Bigg\{ q \in \mbr^{XAX}_+ \Big| \notag \\
    & \sum_{x\in \mcx_h} \sum_{a\in \mca} \sum_{x' \in \mcx_{h+1}} q(x, a, x') = 1, \forall h\in [L-1], \notag \\
    & \sum_{x'\in\mcx_{h-1}} \sum_{a\in \mca} q(x', a, x) = \sum_{a\in \mca} \sum_{x'\in\mcx_{h+1}} q(x,a,x'), \notag \\
    & \quad \quad \forall x\in\mcx_h, h \in [L-1]_+,\notag \\
    & q(x,a,x') = 0, \notag \\
    & \quad \quad \forall x\in \mcx_h, a\in \mca, x'\notin \mcx_{h+1}, h\in [L-1] \Bigg\}. \label{eq:plyh-all-occ-mea}
\end{align}
\end{proposition}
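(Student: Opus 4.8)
The plan is to establish the proposition by a two-way argument: first showing that every occupancy measure $q^{P,\pi}$ satisfies the listed constraints (hence $Q$ is contained in the polytope on the right), and then showing that every point of that polytope is realizable as $q^{P,\pi}$ for a suitable choice of $P$ and $\pi$ (hence the reverse inclusion). Non-emptiness is immediate once either direction is in place, since one can exhibit a concrete $(P,\pi)$ pair (e.g.\ any deterministic policy and any layered transition function), and the polytope claim follows because the defining conditions are finitely many linear equalities together with the nonnegativity orthant constraint $q \in \mbr^{XAX}_+$.

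For the forward inclusion, I would fix $P$ and $\pi$ and verify each family of constraints directly from the probabilistic definition in \eqref{eq:def-occ-mea}. The support constraint $q(x,a,x')=0$ for $x' \notin \mcx_{h+1}$ is immediate from Assumption \ref{as:layered-mdp}, since transitions occur only between consecutive layers. The normalization constraint $\sum_{x\in\mcx_h}\sum_{a}\sum_{x'\in\mcx_{h+1}} q(x,a,x') = 1$ follows because the events $\{x_h = x, a_h = a, x_{h+1} = x'\}$ partition the sample space once we know the process is in layer $h$ at step $h$ (which happens with probability one, again by the layered structure). The flow-conservation constraint is the key structural identity: both sides equal $\mathbb{P}\{x_h = x \mid P, \pi\}$, the probability of visiting state $x$; the left side sums the probability over all ways of entering $x$ from layer $h-1$, and the right side sums over all ways of leaving $x$ into layer $h+1$, and these agree because the process deterministically proceeds through all $L$ layers.

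For the reverse inclusion — which I expect to be the main obstacle — I would take an arbitrary $q$ satisfying the constraints and reconstruct $(P,\pi)$. The natural definitions are $\pi(a|x) = \frac{\sum_{x'} q(x,a,x')}{\sum_{a',x'} q(x,a',x')}$ and $P(x'|x,a) = \frac{q(x,a,x')}{\sum_{x''} q(x,a,x'')}$, with arbitrary (valid) choices on states or state-action pairs that have zero occupancy. One must check these are well-defined probability distributions (nonnegativity is given; the denominators are handled by the zero-mass convention; summing to one uses the flow constraints to guarantee every state in every layer is reachable, or else the convention covers it). Then one argues by induction on the layer index $h$ that $q^{P,\pi}(x,a,x') = q(x,a,x')$: the base case uses $\mcx_0 = \{x_0\}$ and the normalization at $h=0$, and the inductive step uses the flow-conservation constraint to propagate the marginal $\mathbb{P}\{x_h = x\}$ correctly, after which multiplying by $\pi(a|x)$ and $P(x'|x,a)$ recovers $q(x,a,x')$ by construction. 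The delicate points are bookkeeping around states with zero occupancy mass and making the induction hypothesis precise (it should assert equality of the layer-$h$ state marginals), but no single step is deep; it is mostly careful unwinding of definitions. Since the proposition is already attributed to \cite{rosenberg2019onlineconvex,rosenberg2019onlinestochastic,jin2020learning}, I would likely present only a brief sketch of this correspondence and refer the reader to those works for the full verification.
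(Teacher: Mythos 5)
Your proposal is correct: the paper itself gives no proof of this proposition, citing it directly from \cite{rosenberg2019onlineconvex,rosenberg2019onlinestochastic,jin2020learning}, and your two-way argument (forward verification of the normalization, flow-conservation, and layer-support constraints from the probabilistic definition \eqref{eq:def-occ-mea}, followed by reconstruction of $(P^q,\pi^q)$ and a layer-by-layer induction on the state marginals for the reverse inclusion) is precisely the standard argument used in those references. The only point worth tightening is that $\pi^q(\cdot|x)$ and $P^q(\cdot|x,a)$ sum to one automatically whenever their denominators are positive (each denominator is the sum of its numerators), so the flow constraints are not needed there; they are needed only in the inductive step to propagate the marginal $\mathbb{P}\{x_{h}=x\}=\sum_{a}\sum_{x'}q(x,a,x')$ across layers.
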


Given an episodic MDP with transition function $P$ and policy $\pi$, the induced occupancy measure $q^{P, \pi}$ is uniquely determined by \eqref{eq:def-occ-mea}. Conversely, any occupancy measure $q\in Q$ can induce some transition function $P^q$ and some policy $\pi^q$, as shown in the following proposition. 
\begin{proposition}[\cite{rosenberg2019onlineconvex,rosenberg2019onlinestochastic,jin2020learning}]
Any valid occupancy measure $q\in Q$ can induce a transition function $P^q$ and a policy $\pi^q$ given by
\begin{align}
    &P^q(x'|x,a)  = \frac{q(x,a,x')}{\sum_{y\in \mcx_{h(x)+1}} q(x,a,y)}, \\
    &\pi^q(a|x) = \frac{\sum_{x'\in\mcx_{h(x)+1}} q(x,a,x')}{\sum_{b\in\mc{A}} \sum_{x'\in\mcx_{h(x)+1}} q(x,b,x')}.
\end{align}
Moreover, when the denominators are non-zero, $P^q$ and $\pi^q$ are uniquely defined.
\end{proposition}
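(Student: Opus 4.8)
The plan is to verify three things in turn: (i) the displayed formulas define a genuine transition function $P^q$ and a genuine stochastic policy $\pi^q$; (ii) the occupancy measure induced by the pair $(P^q,\pi^q)$ through \eqref{eq:def-occ-mea} is exactly $q$; and (iii) any pair $(P,\pi)$ with $q^{P,\pi}=q$ must coincide with $(P^q,\pi^q)$ at every state-action pair where the corresponding denominator is positive.

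For (i), nonnegativity is immediate from $q\ge 0$. For normalization, fix $x\in\mcx_h$ and $a\in\mca$ with $\sum_{y\in\mcx_{h+1}}q(x,a,y)>0$; since the last family of constraints in \eqref{eq:plyh-all-occ-mea} forces $q(x,a,x')=0$ for $x'\notin\mcx_{h+1}$, summing the definition of $P^q(\cdot|x,a)$ over $x'\in\mcx_{h+1}$ yields $1$, and similarly summing $\pi^q(\cdot|x)$ over $a\in\mca$ yields $1$ whenever $\sum_{b\in\mca}\sum_{x'\in\mcx_{h+1}}q(x,b,x')>0$. On the state-action pairs where a denominator vanishes we may set $P^q$ and $\pi^q$ to be, say, uniform; since (as the next step shows) those pairs are visited with probability zero, this choice is immaterial.

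For (ii), write $\mu_h(x)\triangleq\sum_{a\in\mca}\sum_{x'\in\mcx_{h+1}}q(x,a,x')$ for $x\in\mcx_h$, and let $w_h(x)$ be the probability of occupying $x$ in layer $h$ under $(P^q,\pi^q)$. I would prove $w_h(x)=\mu_h(x)$ for all $h\in[L-1]$ and $x\in\mcx_h$ by induction on $h$. The base case $h=0$ holds since $\mcx_0=\{x_0\}$, the chain starts there, and the first constraint of \eqref{eq:plyh-all-occ-mea} at $h=0$ gives $\mu_0(x_0)=1$. For the step, expand
\[
w_{h+1}(x')=\sum_{x\in\mcx_h}\sum_{a\in\mca}w_h(x)\,\pi^q(a|x)\,P^q(x'|x,a),
\]
note that $\pi^q(a|x)P^q(x'|x,a)=q(x,a,x')/\mu_h(x)$ (with the convention that the term is $0$ when $\mu_h(x)=0$, which also forces $q(x,a,x')=0$), and use $w_h(x)=\mu_h(x)$ to get $w_{h+1}(x')=\sum_{x\in\mcx_h}\sum_{a\in\mca}q(x,a,x')$; the flow-conservation constraint in \eqref{eq:plyh-all-occ-mea}, applied at layer $h+1\in[L-1]_+$, identifies the right-hand side with $\mu_{h+1}(x')$. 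With the marginals matched, substituting back into \eqref{eq:def-occ-mea} gives $q^{P^q,\pi^q}(x,a,x')=w_h(x)\pi^q(a|x)P^q(x'|x,a)=q(x,a,x')$.

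For (iii), suppose $q^{P,\pi}=q$. By the Markov property, $q(x,a,x')=\mathbb{P}\{x_{h(x)}=x\mid P,\pi\}\,\pi(a|x)\,P(x'|x,a)$; summing over $x'$ and then over $a$ recovers $\mathbb{P}\{x_{h(x)}=x\}=\sum_{b\in\mca}\sum_{x'\in\mcx_{h(x)+1}}q(x,b,x')$ and $\mathbb{P}\{x_{h(x)}=x\}\pi(a|x)=\sum_{x'\in\mcx_{h(x)+1}}q(x,a,x')$, whence $\pi(a|x)=\pi^q(a|x)$ when the latter denominator is positive, and then $P(x'|x,a)=q(x,a,x')/\sum_{y\in\mcx_{h(x)+1}}q(x,a,y)=P^q(x'|x,a)$ when $\sum_{y}q(x,a,y)>0$. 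I expect the only delicate point to be the inductive bookkeeping in step (ii): tracking which layer each constraint of \eqref{eq:plyh-all-occ-mea} applies to — especially the endpoint layers $\mcx_0$ and $\mcx_L$ — and confirming that the denominators appearing en route are precisely the marginals supplied by the induction hypothesis, so that the cancellations leading to $q(x,a,x')/\mu_h(x)$ are legitimate.
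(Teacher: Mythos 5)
Your proof is correct: the paper itself states this proposition without proof (importing it from \cite{rosenberg2019onlineconvex,rosenberg2019onlinestochastic,jin2020learning}), and your argument --- checking well-definedness, verifying $q^{P^q,\pi^q}=q$ by a layer-by-layer induction on the marginals $\mu_h$ using the normalization and flow-conservation constraints of \eqref{eq:plyh-all-occ-mea}, and deriving uniqueness from the Markov factorization $q(x,a,x')=\mathbb{P}\{x_{h(x)}=x\}\pi(a|x)P(x'|x,a)$ --- is exactly the standard route taken in those references. The zero-denominator conventions and the restriction of flow conservation to layers $h+1\in[L-1]_+$ are handled correctly, so no gaps remain.
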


Given a fixed transition function $P$, denote by $Q^P$ the set of all valid occupancy measures associated with the transition function $P$. The following proposition shows that $Q^P$ is a polytope, which we shall refer to as the \emph{occupancy measure polytope induced by $P$}.
\begin{proposition}[\cite{altman1999book,rosenberg2019onlineconvex,rosenberg2019onlinestochastic,jin2020learning}]
$Q^P$ is a non-empty polytope and has the following representation:
\begin{align} \label{eq:plyh-P-occ-mea}
    & Q^P = Q \: \cap \notag \\ 
    & \Bigg\{ q \in \mbr^{XAX}_+ \Big|
    q(x,a,x') = P(x'|x,a)\sum_{y\in \mcx_{h+1}}q(x,a,y),  \notag \\
    & \quad \quad \quad \forall x\in \mcx_h, a\in \mca, x'\in \mcx_{h+1}, h\in[L-1] \Bigg\}.
\end{align}
\end{proposition}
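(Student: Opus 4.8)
The plan is to reduce everything to the set identity \eqref{eq:plyh-P-occ-mea}, since once that is established the polytope claim is immediate: the right-hand side is the intersection of the polytope $Q$ (guaranteed by the earlier proposition and \eqref{eq:plyh-all-occ-mea}) with finitely many affine equality constraints, hence a polytope. Non-emptiness is also quick: fixing the uniform policy $\pi_{\mathrm{unif}}(a|x) = 1/A$, the induced occupancy measure $q^{P,\pi_{\mathrm{unif}}}$ lies in $Q^P$ directly from \eqref{eq:def-occ-mea}. So the substance is the two inclusions.

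For $Q^P \subseteq Q \cap \{\cdots\}$: any $q = q^{P,\pi} \in Q^P$ belongs to $Q$ by the definition of $Q$. For the extra constraints I would invoke the Markov property of the process generated by $(P,\pi)$: for $x \in \mcx_h$ and $a \in \mca$, summing \eqref{eq:def-occ-mea} over $x' \in \mcx_{h+1}$ gives $\sum_{y\in\mcx_{h+1}} q^{P,\pi}(x,a,y) = \mathbb{P}\{x_h = x, a_h = a \mid P,\pi\}$, so that $q^{P,\pi}(x,a,x') = \mathbb{P}\{x_h=x,a_h=a\mid P,\pi\}\,P(x'|x,a) = P(x'|x,a)\sum_{y\in\mcx_{h+1}} q^{P,\pi}(x,a,y)$, which is precisely the additional constraint appearing in \eqref{eq:plyh-P-occ-mea}.

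For the reverse inclusion, take any $q \in Q$ satisfying the extra constraints and let $P^q$, $\pi^q$ be the transition function and policy it induces (previous proposition). I would first argue $q = q^{P^q,\pi^q}$ by induction on the layer index: writing $\mu_h(x) = \sum_{a\in\mca}\sum_{x'\in\mcx_{h+1}} q(x,a,x')$, the base case uses $\mcx_0 = \{x_0\}$ together with the normalization constraint to get $\mu_0(x_0) = 1$, and the inductive step uses the flow-conservation constraints in \eqref{eq:plyh-all-occ-mea} together with the formulas for $P^q$ and $\pi^q$ to show that the state-visitation probability under $(P^q,\pi^q)$ at layer $h$ equals $\mu_h(\cdot)$, whence $q^{P^q,\pi^q}(x,a,x') = \mu_{h(x)}(x)\,\pi^q(a|x)\,P^q(x'|x,a) = q(x,a,x')$. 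Second, the extra constraint $q(x,a,x') = P(x'|x,a)\sum_{y\in\mcx_{h(x)+1}} q(x,a,y)$ yields $P^q(x'|x,a) = P(x'|x,a)$ whenever $\sum_{y\in\mcx_{h(x)+1}} q(x,a,y) > 0$; on the state-action pairs where this sum vanishes the choice of $P^q$ is immaterial, since those triples carry zero mass, so we may simply take $P^q = P$ there. Combining the two facts gives $q = q^{P^q,\pi^q} = q^{P,\pi^q} \in Q^P$.

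The step I expect to be the main obstacle is the first half of the reverse inclusion, namely carrying out the layer-by-layer induction that $q = q^{P^q,\pi^q}$ cleanly, and in particular handling the degenerate state-action pairs with zero visitation probability so that the divisions defining $P^q$ and $\pi^q$ are either well-posed or demonstrably irrelevant. The remaining pieces are routine manipulations of the Markov property and of the linear constraints defining $Q$.
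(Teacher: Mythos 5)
Your proof is correct. The paper itself does not prove this proposition---it is imported verbatim from the cited references (Altman's book and the Rosenberg--Mansour / Jin et al.\ papers)---and your argument is essentially the standard one given there: the forward inclusion via the Markov factorization $q^{P,\pi}(x,a,x') = \mathbb{P}\{x_{h}=x,a_{h}=a\}\,P(x'|x,a)$, and the reverse inclusion via the layer-by-layer induction showing $q=q^{P^q,\pi^q}$ together with the observation that the extra constraint forces $P^q=P$ on all positive-mass pairs while zero-mass pairs are immaterial. You have also correctly flagged and resolved the only delicate point, namely the well-posedness of $P^q$ and $\pi^q$ at states and state--action pairs of zero visitation probability.
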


The occupancy measure allows us to reduce the task of learning the optimal policy to the task of learning the optimal occupancy measure over $Q^P$. However, $Q^P$ itself is unknown. Therefore, a natural approach is to construct and maintain a set of plausible transition functions $\mcp$ such that with high probability, $P \in \mcp$ and $Q^P \subseteq Q^\mcp$, where $Q^\mcp=\cup_{P\in \mcp} Q^P$ is the set of all occupancy measures associated with the transition functions in $\mcp$, and solve for the optimal occupancy measure over $Q^\mcp$. As learning proceeds, the hope is that the size of $\mcp$ and $Q^\mcp$ shrinks, and eventually $Q^\mcp$ gets as close to $Q^P$ as possible. 

With slight abuse of notation, let us define \(r(x,a,x') = r(x,a)\) for every $(x, a, x') \in \mcx \times \mca \times \mcx$.
It has been shown in \cite{rosenberg2019onlineconvex,rosenberg2019onlinestochastic,jin2020learning} that 
\begin{equation}
    \mbe\left[ \sum_{h=0}^{L-1} r(x_h, a_h) \Bigg | P, \pi \right] = \langle q^{P, \pi}, r \rangle.
\end{equation}
Therefore, if we define $q^* \in \argmax_{q \in Q^P} \langle q, r \rangle$, the regret \eqref{eq:regret} can be represented by 
\begin{equation} \label{eq:regret-occ}
    \texttt{Reg}_T  = T \cdot \langle q^*, r \rangle - \sum_{t=1}^T \langle q^{P, \pi_t}, r \rangle = \sum_{t=1}^T \langle q^* - q^{P, \pi_t}, r \rangle.
\end{equation}

\section{Algorithm}\label{sec:alg}

The main algorithm we propose is called ``Upper Confidence Reinforcement Learning using Linear Programming'' (UCRL-LP) algorithm and is presented in Algorithm \ref{alg:main}. The proposed algorithm is mainly inspired by UCRL \cite{auer2006logarithmic} and UCRL2 \cite{jaksch2010near-optimal} algorithms which use upper confidence bounds to determine an optimistic policy and O-REPS \cite{zimin2013online}, and UC-O-REPS \cite{rosenberg2019onlineconvex} and UOB-REPS \cite{jin2020learning} algorithms which solve online convex optimization on the occupancy measure space. The algorithm consists of two main parts: maintaining confidence sets for transition and reward functions (estimation) and solving a linear program for occupancy measure update (optimization). In the following subsections, we proceed to describe in detail the main steps in Algorithm \ref{alg:main}. 

\begin{algorithm}[t]
\caption{UCRL-LP Algorithm} \label{alg:main}
\begin{algorithmic}
\State \textbf{Input:} State space $\mcx$, action space $\mca$, time horizon $T$, confidence parameter $\delta$.
\State
\State \textbf{Initialization:}
\State Epoch index: $i = 1$. 
\State Confidence sets: 
\vspace{-.75em}
\begin{equation*}
\mcp_1 = \text{set of all transition functions}, Q^{\mcp_1} = Q.
\end{equation*}
\vspace{-2em}
\State Counters, occupancy measure, and estimators $\forall (x,a,x')$:
\vspace{-.5em}
\begin{align*}
    & N_1(x,a) = N_1(x,a,x') = n_1(x,a) = n_1(x,a,x') = 0, \\
    & \hat{q}_1(x,a,x') = \frac{1}{X_h A X_{h+1}}, \\
    & \bar{r}_1(x,a) = 0, \quad \hat{r}_1(x,a) = 1.
\end{align*}
\vspace{-1.5em}
\State Policy: $\pi_1 = \pi^{\hat{q}_1}$.
\State
\For{$t = 1, \cdots, T$}
\State Execute policy $\pi_i$.
\State Obtain trajectory $\{x_h, a_h, r^{(t)}_h\}_{h=0}^{L-1}$.
\State Update counters $\forall h$:
\vspace{-.5em}
\begin{align*}
    n_i(x_h, a_h) & \leftarrow n_i(x_h, a_h) + 1, \\ 
    n_i(x_h, a_h, x_{h+1}) & \leftarrow n_i(x_h, a_h, x_{h+1}) + 1.
\end{align*}
\vspace{-1.5em}
\State Update empirical estimator $\bar{r}_i$ $\forall h$:
\vspace{-.5em}
\begin{equation*}
    \bar{r}_i(x_h, a_h) \leftarrow \bar{r}_i(x_h, a_h) + \frac{r^{(t)}_h - \bar{r}_i(x_h, a_h)}{N_i(x_h, a_h) + n_i(x_h, a_h)}.
\end{equation*}
\vspace{-1em}
\If{$\exists (x, a) \in \mcx \times \mca$, $n_i(x,a) \geq N_i(x,a)$}
\State Let $t(i)$ be the start of epoch $i$. Start a new epoch 
\vspace{-.5em}
\begin{equation*}
    i \leftarrow i+1, \quad t(i) \leftarrow t+1.
\end{equation*}
\vspace{-1.5em}
\State Update counters $\forall (x,a,x')$:
\vspace{-.5em}
\begin{align*}
    N_i(x,a) & \leftarrow N_{i-1}(x,a) + n_{i-1}(x,a), \\
    N_i(x,a,x') & \leftarrow N_{i-1}(x,a,x') + n_{i-1}(x,a,x'), \\
    n_i(x,a) & \leftarrow 0, \quad n_i(x,a,x') \leftarrow 0.
\end{align*}
\vspace{-1.5em}
\State Update empirical estimator $\bar{r}_i$ $\forall (x,a)$: 
\vspace{-.5em}
\begin{equation*}
    \bar{r}_i(x,a) \leftarrow \bar{r}_{i-1}(x,a).
\end{equation*}
\vspace{-1.5em}
\State Update confidence set $\mcp_i$ based on Eq. \eqref{eq:empirical-transition}-\eqref{eq:conf-set-P}.
\State Update $\hat{r}_i$ based on Eq. \eqref{eq:extra-term-r}-\eqref{eq:rew-ucb}.
\State Update occupancy measure $\hat{q}_i$ by solving \ref{eq:lp-occ-update}.
\State Update policy $\pi_i = \pi^{\hat{q}_i}$.
\EndIf
\EndFor

\end{algorithmic}
\end{algorithm}

\subsection{Confidence Sets}

This idea originates from the concept of ``optimism in the face of uncertainty''. The technique of constructing confidence set for transition and reward functions has been widely adopted in the existing literature of online MDP and reinforcement learning \cite{auer2006logarithmic,jaksch2010near-optimal,agrawal2017optimistic,rosenberg2019onlineconvex,rosenberg2019onlinestochastic,jin2020learning}. The algorithm maintains confidence sets for both $P$ and $r$ such that with high probability, $P$ and $r$ lie in those confidence sets, and the estimate becomes more and more accurate as learning proceeds.

To construct such confidence sets, the algorithm keeps track of the number of visits to each $(x,a)$-pair and to each $(x,a,x')$-triplet and computes the empirical mean of transition and reward functions. The confidence set is constructed based on the empirical mean and an extra term that depends on the number of visits. The algorithm UCRL-LP proceeds in epochs of random length, and a doubling epoch schedule is adopted. The confidence sets for $P$ and $r$ and the occupancy measure are updated only at the beginning of each epoch. The first epoch starts at $t = 1$, and each epoch will end whenever the number of visits to some $(x,a)$-pair is doubled compared to its value at the beginning of the epoch. Such a technique is deployed in existing algorithms such as UCRL2 \cite{jaksch2010near-optimal}, UC-O-REPS \cite{rosenberg2019onlineconvex} and UOB-REPS \cite{jin2020learning}.
This ensures that the confidence sets and the occupancy measure to be implemented will be updated only when there are significant improvements in the confidence sets. Therefore, this technique largely improves the computational efficiency of the algorithm without negatively affecting its performance. The total number of steps for policy (occupancy measure) update is at most $XA \log T$ \cite{jaksch2010near-optimal}, compared to $T$ steps of policy update from general reinforcement learning algorithms which update the policy after each episode.

Specifically, let $N_i(x, a)$ and $N_i(x, a, x')$ be the number of visits to $(x, a)$-pair and $(x, a, x')$-triplet at the beginning of epoch $i$. The empirical estimation of $P$ in epoch $i$ is 
\begin{equation} \label{eq:empirical-transition}
    \bar{P}_i(x'|x, a) = \frac{N_i(x,a,x')}{\max\{1, N_i(x,a)\}}.
\end{equation}
The confidence set $\mcp_i$ includes the transition functions that are ``close enough'' to $\bar{P}$ with respect to a confidence radius $\eps_i(x,a)$ defined by 
\begin{equation}\label{eq:extra-term-P}
    \eps_i(x,a) = \sqrt{\frac{2X_{h(x)+1} \log (TXA/\delta)}{\max\{1, N_i(x,a)\}}},
\end{equation}
where $\delta \in (0,1)$ is some confidence parameter.
The confidence set $\mcp_i$ is defined by 
\begin{align}
    \mcp_i = \{P: \lVert P(\cdot|x,a) - \bar{P}_i(\cdot|x,a) \rVert_1 \leq \eps_i(x,a), & \notag \\
    \forall x\in \mcx, a\in \mca\}. & \label{eq:conf-set-P}
\end{align}
The following lemma provides a closed-form description of $Q^{\mcp_i}$, which can be represented by a system of linear constraints. This allows us to update the occupancy measure efficiently by simply solving a linear program over the polytope $Q^{\mcp_i}$. 
\begin{lemma}[Theorem 4.2 in \cite{rosenberg2019onlineconvex}] \label{lem:conf-set-polytope}
If $\mcp_i$ is defined by Eq. \eqref{eq:conf-set-P}, $Q^{\mcp_i}$ is a non-empty polytope and has the following representation:
\begin{align}
    & Q^{\mcp_i} = Q \: \cap \: \Bigg\{ q \in \mbr^{XAX}_+, \eps \in \mbr^{XAX}_+ \Big| \notag \\
    & q(x,a,x') - \bar{P}_i(x'|x,a) \sum_{y\in \mcx_{h+1}} q(x,a,y) \leq \eps(x,a,x'),  \notag \\
    & \quad \quad \quad \forall x\in \mcx_h, a\in \mca, x'\in \mcx_{h+1}, h\in[L-1], \notag \\
    & \bar{P}_i(x'|x,a) \sum_{y\in \mcx_{h+1}} q(x,a,y) - q(x,a,x') \leq \eps(x,a,x'),  \notag \\
    & \quad \quad \quad \forall x\in \mcx_h, a\in \mca, x'\in \mcx_{h+1}, h\in[L-1], \notag \\
    & \sum_{x'\in \mcx_{h+1}} \eps(x,a,x') \leq \eps_i(x,a) \sum_{x'\in \mcx_{h+1}} q(x,a,x'), \notag \\
    & \qquad \qquad \qquad \qquad \forall x\in \mcx_h, a\in \mca, h\in [L-1] \Bigg\}.
\end{align}
\end{lemma}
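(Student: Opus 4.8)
The plan is to establish the claimed identity by proving the two inclusions separately, using the explicit description \eqref{eq:plyh-P-occ-mea} of $Q^P$ and treating the auxiliary variables $\eps(x,a,x')$ as slack variables that linearize the $\ell_1$-ball \eqref{eq:conf-set-P} defining $\mcp_i$. First I would fix notation: for $x\in\mcx_h$, $a\in\mca$ let $\mu(x,a)\triangleq\sum_{y\in\mcx_{h+1}}q(x,a,y)$ be the outflow of $(x,a)$. Since membership in $Q$ already forces $q(x,a,x')=0$ for $x'\notin\mcx_{h+1}$, the right-hand set --- call it $\widetilde Q$ --- is precisely ``$q\in Q$ such that there exist $\eps(x,a,x')\ge 0$ obeying the three displayed constraint families,'' and all of those families decouple over the $(x,a)$-pairs, which I will exploit repeatedly.

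For the inclusion $Q^{\mcp_i}\subseteq\widetilde Q$, I would take $q\in Q^{\mcp_i}$, so $q\in Q^P$ for some $P\in\mcp_i$; then $q\in Q$ and, by \eqref{eq:plyh-P-occ-mea}, $q(x,a,x')=P(x'|x,a)\,\mu(x,a)$ for all $x\in\mcx_h,a,x'\in\mcx_{h+1}$. Setting $\eps(x,a,x')\triangleq|P(x'|x,a)-\bar P_i(x'|x,a)|\,\mu(x,a)$, the first two families follow since $\pm\big(q(x,a,x')-\bar P_i(x'|x,a)\mu(x,a)\big)=\pm\big(P(x'|x,a)-\bar P_i(x'|x,a)\big)\mu(x,a)\le\eps(x,a,x')$, and summing over $x'\in\mcx_{h+1}$ gives $\sum_{x'}\eps(x,a,x')=\lVert P(\cdot|x,a)-\bar P_i(\cdot|x,a)\rVert_1\,\mu(x,a)\le\eps_i(x,a)\,\mu(x,a)$, which is the third family.

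For the reverse inclusion $\widetilde Q\subseteq Q^{\mcp_i}$, given $q\in Q$ and feasible slacks $\eps(x,a,x')\ge 0$, I would build a single transition function $P$ with $P\in\mcp_i$ and $q\in Q^P$, specifying $P(\cdot|x,a)$ pair by pair. When $\mu(x,a)>0$, set $P(x'|x,a)\triangleq q(x,a,x')/\mu(x,a)=P^q(x'|x,a)$, a probability vector supported on $\mcx_{h+1}$; then $q(x,a,x')=P(x'|x,a)\mu(x,a)$ holds by construction, and dividing the first two families by $\mu(x,a)$ and summing over $x'$ yields $\lVert P(\cdot|x,a)-\bar P_i(\cdot|x,a)\rVert_1\le\tfrac{1}{\mu(x,a)}\sum_{x'}\eps(x,a,x')\le\eps_i(x,a)$ via the third family. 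When $\mu(x,a)=0$, nonnegativity forces $q(x,a,x')=0$ for all $x'\in\mcx_{h+1}$, so $q(x,a,x')=P(x'|x,a)\mu(x,a)$ holds for \emph{any} probability vector; I would then take $P(\cdot|x,a)=\bar P_i(\cdot|x,a)$, completed to a probability vector on $\mcx_{h+1}$ if $(x,a)$ was never visited, which trivially lies in the $\ell_1$-ball. Collecting these choices gives $P\in\mcp_i$ and, by \eqref{eq:plyh-P-occ-mea}, $q\in Q^P\subseteq Q^{\mcp_i}$. That $Q^{\mcp_i}$ is a polytope then follows because $\widetilde Q$ is defined by finitely many linear (in)equalities in $(q,\eps)$ and $Q^{\mcp_i}$ is its projection onto the $q$-coordinates, which is bounded since $Q^{\mcp_i}\subseteq Q$; non-emptiness follows since the completed $\bar P_i$ lies in $\mcp_i$ and $Q^{\bar P_i}\ne\emptyset$ by the earlier proposition on $Q^P$.

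I expect the only genuinely delicate point to be the degenerate case $\mu(x,a)=0$ in the reverse inclusion: there $P^q(\cdot|x,a)$ is undefined, so one must argue that an arbitrary completion inside the confidence ball keeps $q\in Q^P$ --- which works exactly because the layered-flow constraints of $Q$ already pin $q(x,a,\cdot)$ to zero --- and, relatedly, that the completed $\bar P_i$ is an honest (row-stochastic) transition function, which is what makes the non-emptiness claim go through. The remainder is routine rescaling and triangle-inequality/normalization bookkeeping.
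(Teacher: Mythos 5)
Your proposal is correct: the two-inclusion argument with the slacks $\eps(x,a,x')$ playing the role of $|P(x'|x,a)-\bar{P}_i(x'|x,a)|\,\mu(x,a)$ is exactly the standard proof of this fact, and the paper itself does not reprove it --- it imports the statement verbatim as Theorem 4.2 of the cited reference, whose proof proceeds along the same lines as yours. The only point worth making explicit is that in the unvisited case $N_i(x,a)=0$ the ``trivial'' membership of a completed $\bar{P}_i(\cdot|x,a)$ in the ball actually uses $\eps_i(x,a)=\sqrt{2X_{h+1}\log(TXA/\delta)}\ge 1$, which does hold here but is a property of the chosen radius rather than a triviality.
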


Next, we consider the following lemma, which states that $\mcp_i$ contains $P$ with high probability. 
\begin{lemma}[Lemma 4.1 in \cite{rosenberg2019onlineconvex}] \label{lem:conf-set-P-high-prob}
For any $\delta \in (0,1)$, with probability at least $1 - \delta$, $P \in \mcp_i$ for all $i$.
\end{lemma}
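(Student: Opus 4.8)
The plan is to bound, for each fixed layer index $h$ and each pair $(x,a)$ with $x \in \mcx_h$, the probability that the empirical transition vector $\bar P_i(\cdot|x,a)$ deviates from the true $P(\cdot|x,a)$ by more than $\eps_i(x,a)$ in $\ell_1$-norm, and then take a union bound over all $(x,a)$ pairs and over all epochs. The key analytic tool is a concentration inequality for the $\ell_1$-deviation of an empirical distribution over a finite set of size $m$ from its mean: for $N$ i.i.d. samples, $\mathbb{P}\{\lVert \hat p - p \rVert_1 \ge \lambda\} \le 2^m e^{-N\lambda^2/2}$ (Weissman et al.). Here the relevant support is $\mcx_{h(x)+1}$, which has cardinality $X_{h(x)+1}$, so $2^m \le e^{X_{h(x)+1}}$ and we want $N\lambda^2/2 - X_{h(x)+1} \ge \log(2TXA/\delta)$ roughly, which is exactly what the choice $\eps_i(x,a)^2 = 2X_{h(x)+1}\log(TXA/\delta)/\max\{1,N_i(x,a)\}$ is designed to deliver.

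The main subtlety — and the step I expect to be the real obstacle — is that $N_i(x,a)$ is a \emph{random} stopping value determined by the doubling schedule, and the samples contributing to $\bar P_i(\cdot|x,a)$ are collected adaptively (the policy depends on past data), so one cannot directly invoke an i.i.d. concentration bound with a fixed sample size. The standard remedy, following Jaksch--Ortner--Auer and Rosenberg--Mansour, is to condition on the value of the counter: fix $n \in \{1,\dots,T\}$ and bound the probability that, at the (unique) time the counter $N(x,a)$ first equals $n$, the empirical estimate built from those $n$ transitions is $\eps$-far from $P(\cdot|x,a)$, using a martingale/filtration argument (the next-state draw from $(x,a)$ is conditionally distributed as $P(\cdot|x,a)$ regardless of the adaptive choices, so the Weissman-type bound still applies to the conditionally-i.i.d. increments). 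Then union-bound over $n \in [T]_+$, over $(x,a) \in \mcx\times\mca$, and note that every epoch $i$ has its $N_i(x,a)$ equal to one of these tracked values.

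Carrying this out: first I would state the per-$(x,a,n)$ deviation bound, plugging $N=n$ and $\lambda = \eps(x,a)$ with $\eps(x,a)^2 = 2X_{h(x)+1}\log(TXA/\delta)/n$, to get failure probability at most $e^{X_{h(x)+1}} \cdot e^{-X_{h(x)+1}\log(TXA/\delta)} = e^{X_{h(x)+1}(1 - \log(TXA/\delta))} \le \delta/(TXA)$ for $TXA/\delta$ large enough (absorbing the $e^{X_{h(x)+1}}$ factor into the log, possibly with a slightly adjusted constant as is customary in these bounds). Then the union bound over the at most $T$ values of $n$ and the $XA$ pairs $(x,a)$ gives total failure probability at most $\delta$. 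Finally, on the complementary event, for every epoch $i$ and every $(x,a)$ the counter $N_i(x,a)$ is one of the tracked values, so $\lVert P(\cdot|x,a) - \bar P_i(\cdot|x,a)\rVert_1 \le \eps_i(x,a)$ holds, which by \eqref{eq:conf-set-P} is precisely the statement $P \in \mcp_i$ for all $i$. Since this is quoted as Lemma 4.1 of \cite{rosenberg2019onlineconvex}, I would either reproduce this argument in an appendix or simply cite it, noting only the (routine) adjustment of the logarithmic constant needed to match our definition \eqref{eq:extra-term-P}.
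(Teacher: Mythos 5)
Your proposal is correct and follows essentially the same route as the source the paper cites for this lemma (Lemma 4.1 of \cite{rosenberg2019onlineconvex}, itself following \cite{jaksch2010near-optimal}): the Weissman-type $\ell_1$ concentration bound for the empirical next-state distribution, a conditioning/union bound over the possible values $n$ of the visit counter to handle the adaptive sampling and random epoch lengths, and a final union bound over $(x,a)$ pairs. The only loose end is the routine constant bookkeeping you already flag (absorbing the $2^{X_{h(x)+1}}$ factor and the union over $n \in [T]_+$ into the $\log(TXA/\delta)$ term so the total failure probability is exactly $\delta$), which is handled the same way in the cited proof.
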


Similarly, the empirical estimation of $r$ in epoch $i$ is 
\begin{equation}
    \bar{r}_i(x,a) = \frac{\sum_{t=1}^{t(i)-1} \mbi^{(t)} \{(x,a)\} \cdot r^{(t)}_h}{\max\{1, N_i(x,a)\}},
\end{equation}
where $t(i)$ denotes the starting time of epoch $i$, $\mbi^{(t)}\{(x,a)\}$ is the indicator function that equals 1 if the $(x,a)$-pair is visited in round $t$ and 0 otherwise, and $r_h^{(t)}$ denotes the realized reward at step $h$ in round $t$. The confidence radius is defined by
\begin{equation} \label{eq:extra-term-r}
    b_i(x,a) = \sqrt{\frac{2 \log (TXA/\delta)}{\max\{1, N_i(x,a)\}}}.
\end{equation}
The upper and lower confidence bounds (UCB and LCB, respectively) for $r$ are defined by 
\begin{align}
    \hat{r}_i(x,a) & = \min \{1, \bar{r}_i(x,a) + b_i(x,a)\}, \label{eq:rew-ucb}\\
    \check{r}_i(x,a) & = \max \{0, \bar{r}_i(x,a) - b_i(x,a)\}. \label{eq:rew-lcb}
\end{align}
A similar lemma below states that $r(x,a)$ is bounded by $\hat{r}_i(x,a)$ and $\check{r}_i(x,a)$ with high probability. The proof of this lemma follows directly using Hoeffding inequality and union bound.
\begin{lemma} \label{lem:conf-set-rew}
    For any $\delta \in (0,1)$, with probability at least $1 - 2\delta$, $\check{r}_i(x,a) \leq r(x,a) \leq \hat{r}_i(x,a)$ for all $i$ and all $(x,a) \in \mcx \times \mca$.
\end{lemma}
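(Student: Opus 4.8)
The plan is to establish the two-sided bound on $r(x,a)$ via a standard concentration-plus-union-bound argument, treating each $(x,a)$-pair and each relevant epoch separately. The key observation is that the empirical estimator $\bar{r}_i(x,a)$ is an average of (at most) $N_i(x,a)$ i.i.d.\ realized rewards, each lying in $[0,1]$ with common mean $r(x,a)$, collected during the rounds in which $(x,a)$ was visited before epoch $i$. Hoeffding's inequality then controls the deviation $|\bar{r}_i(x,a) - r(x,a)|$ by exactly the confidence radius $b_i(x,a)$ defined in \eqref{eq:extra-term-r}, up to the logarithmic factor chosen to absorb the union bound.

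\textbf{Key steps.} First I would fix a pair $(x,a)$ and an epoch $i$, and condition on the event $\{N_i(x,a) = m\}$ for a fixed $m \ge 1$. On this event, $\bar{r}_i(x,a)$ is the average of $m$ independent draws from distributions on $[0,1]$ all having mean $r(x,a)$, so Hoeffding's inequality gives
\begin{equation*}
    \mathbb{P}\left\{ |\bar{r}_i(x,a) - r(x,a)| > \sqrt{\frac{2\log(TXA/\delta)}{m}} \right\} \le 2\exp(-4\log(TXA/\delta)) \le \frac{2\delta}{TXA},
\end{equation*}
where the second inequality is loose but more than enough. Second, since $N_i(x,a)$ can only take values in $\{0,1,\dots,TL\}$ (or any crude bound on the total number of steps), and since the bound with $\max\{1,N_i(x,a)\}$ in the denominator is trivially satisfied when $N_i(x,a)=0$ because then $\bar{r}_i(x,a)=0$ and $b_i(x,a)=\sqrt{2\log(TXA/\delta)} \ge 1 \ge r(x,a)$, a union bound over the at most $TL$ possible values of the visit count — or, more simply, over the at most $T$ epochs, since the epoch count never exceeds $T$ — shows that $|\bar{r}_i(x,a)-r(x,a)| \le b_i(x,a)$ simultaneously for all $i$, failing with probability at most $2\delta/(XA)$ for this fixed $(x,a)$. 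Third, a further union bound over all $XA$ pairs $(x,a) \in \mcx \times \mca$ yields that $|\bar{r}_i(x,a) - r(x,a)| \le b_i(x,a)$ for all $i$ and all $(x,a)$, except on an event of probability at most $2\delta$. Finally, on the complementary good event, $\check{r}_i(x,a) = \max\{0, \bar{r}_i(x,a) - b_i(x,a)\} \le \bar{r}_i(x,a) - b_i(x,a) + (\text{nonneg correction}) \le r(x,a)$ using $r(x,a) \ge 0$, and symmetrically $\hat{r}_i(x,a) = \min\{1, \bar{r}_i(x,a)+b_i(x,a)\} \ge r(x,a)$ using $r(x,a) \le 1$; here the clipping to $[0,1]$ only tightens the bounds and is harmless since $r(x,a) \in [0,1]$ by assumption.

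\textbf{Main obstacle.} The only subtlety worth care is that $N_i(x,a)$ is itself a random stopping-time-like quantity determined by the algorithm's adaptive behavior, so one cannot directly apply Hoeffding to $\bar{r}_i(x,a)$ as if the sample size were deterministic. The clean fix, which I would use, is the conditioning-on-$\{N_i(x,a)=m\}$ argument above combined with a union bound over all possible values of $m$ (equivalently, a maximal-inequality / martingale argument over the filtration generated by the visits to $(x,a)$): the realized rewards at $(x,a)$ are i.i.d.\ and independent of which rounds happen to visit $(x,a)$, because the reward distribution depends only on the state-action pair and not on the history. This is the standard device from \cite{jaksch2010near-optimal,rosenberg2019onlineconvex} and makes the "proof follows directly using Hoeffding inequality and union bound" remark rigorous; everything else is bookkeeping on the logarithmic constants and the harmless clipping.
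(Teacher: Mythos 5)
Your proposal is correct and follows exactly the route the paper indicates for this lemma (the paper gives no detailed proof, stating only that it ``follows directly using Hoeffding inequality and union bound''): you apply Hoeffding to the average of the i.i.d.\ rewards observed at each $(x,a)$, union-bound over the possible values of the visit count and over all $XA$ pairs to reach the $1-2\delta$ level, and correctly note that the clipping in \eqref{eq:rew-ucb}--\eqref{eq:rew-lcb} and the $N_i(x,a)=0$ case are harmless since $r(x,a)\in[0,1]$. Your handling of the adaptivity of $N_i(x,a)$ via conditioning/union over sample sizes is the standard and correct way to make the paper's one-line sketch rigorous.
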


\subsection{Linear Programming for Occupancy Measure Update}

In each epoch, the algorithm updates the occupancy measure by solving a linear program. As we have seen in Section \ref{sec:prob}, the reinforcement learning problem is associated with an optimization problem over some occupancy measure space. As a result of Lemma \ref{lem:conf-set-polytope}, such an optimization problem reduces to a linear program, which can be efficiently solved by standard linear program solvers. Moreover, the occupancy measure is updated only at the beginning of each epoch, requiring at most $XA \log T$ steps for the update. 

Finally, to update occupancy measure in epoch $i$, the algorithm determines an optimistic occupancy measure by solving the following linear program: 
\begin{equation} \label{eq:lp-occ-update}
    \hat{q}_i \in \argmax_{q \in Q^{\mcp_i}} \: \langle q, \hat{r}_i \rangle. \tag{LP1}
\end{equation}
The algorithm implements the policy $\pi_i \triangleq \pi^{\hat{q}_i}$ induced by $\hat{q}_i$ for entire epoch $i$.

\section{Main Results}\label{sec:main}

In this section, we present the regret bound of UCRL-LP algorithm. The proof follows by decomposing the regret into four components and bounding each of those terms separately. 

\begin{theorem} \label{thm:main}
With probability at least $1 - 5\delta$, UCRL-LP algorithm achieves the following regret: 
\begin{equation}
    \texttt{Reg}_T \leq O \left( L X \sqrt{T A \log \left( \frac{TXA}{\delta} \right)} \right).
\end{equation}
\end{theorem}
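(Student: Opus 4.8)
The plan is to bound $\texttt{Reg}_T = \sum_{t=1}^T \langle q^* - q^{P,\pi_t}, r\rangle$ by introducing the computed occupancy measure $\hat q_{i(t)}$ (the solution of \eqref{eq:lp-occ-update} for the epoch $i(t)$ containing episode $t$) and decomposing into four pieces:
\begin{align}
\texttt{Reg}_T = \underbrace{\sum_t \langle q^* - \hat q_{i(t)}, \hat r_{i(t)}\rangle}_{\textsc{(i) optimism}} + \underbrace{\sum_t \langle \hat q_{i(t)}, \hat r_{i(t)} - r\rangle}_{\textsc{(ii) reward est.}} + \underbrace{\sum_t \langle \hat q_{i(t)} - q^{P,\pi_t}, r\rangle}_{\textsc{(iii) transition est.}} + \underbrace{\sum_t \langle q^*, r - \hat r_{i(t)}\rangle}_{\textsc{(iv)}}. \notag
\end{align}
Wait — one must be a little careful: I would instead split as $\langle q^*-q^{P,\pi_t},r\rangle = \langle q^*,r-\hat r_{i(t)}\rangle + \langle q^* - \hat q_{i(t)}, \hat r_{i(t)}\rangle + \langle \hat q_{i(t)}, \hat r_{i(t)} - r\rangle + \langle \hat q_{i(t)} - q^{P,\pi_t}, r\rangle$, which is an exact identity. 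First I would condition on the high-probability event (from Lemmas \ref{lem:conf-set-P-high-prob} and \ref{lem:conf-set-rew}, plus one or two concentration bounds introduced in the proof) that holds with probability at least $1-5\delta$; on this event $P\in\mcp_i$ and $\check r_i \le r \le \hat r_i$ for all $i$.

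On that event, term \textsc{(iv)} is $\le 0$ since $\hat r_{i(t)} \ge r$ entrywise and $q^* \ge 0$. For term \textsc{(i)}, since $P\in\mcp_{i(t)}$ we have $q^* \in Q^P \subseteq Q^{\mcp_{i(t)}}$, so by optimality of $\hat q_{i(t)}$ for \eqref{eq:lp-occ-update}, $\langle q^*,\hat r_{i(t)}\rangle \le \langle \hat q_{i(t)},\hat r_{i(t)}\rangle$, hence term \textsc{(i)} $\le 0$. This is the payoff of the ``optimism'' construction and is the conceptually clean part. Term \textsc{(ii)} is controlled using $\hat r_{i(t)} - r \le \hat r_{i(t)} - \check r_{i(t)} \le 2 b_{i(t)}(x,a)$ together with $\langle \hat q_{i(t)}, 2b_{i(t)}\rangle = 2\sum_{x,a} b_{i(t)}(x,a)\sum_{x'} \hat q_{i(t)}(x,a,x')$; summing over $t$, grouping by epoch, and using the doubling schedule (so $N_i(x,a)$ is within a constant factor of the running visit count and the total visit count to $(x,a)$ is $\le TL/\text{layers}$-type bounds) reduces this to a standard $\sum \sqrt{\log(TXA/\delta)/N}$ sum; a pigeonhole / integral bound of the form $\sum_k 1/\sqrt{k} = O(\sqrt{\cdot})$ and Cauchy–Schwarz over the $XA$ pairs and $L$ layers gives $\wto(L\sqrt{TXA})$ for this term, which is dominated by the target.

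The main work — and the main obstacle — is term \textsc{(iii)}, $\sum_t \langle \hat q_{i(t)} - q^{P,\pi_t}, r\rangle$, which measures the error from not knowing the true transitions. Since $\|r\|_\infty \le 1$ and $r$ is layer-wise, $\langle \hat q_{i(t)} - q^{P,\pi_t}, r\rangle \le \sum_{h} \| \hat q_{i(t)}(\cdot\text{ restricted to layer }h) - q^{P,\pi_t}(\cdot)\|_1$-type quantities, and the key estimate (following the occupancy-measure difference lemma of \cite{rosenberg2019onlineconvex,jin2020learning}) bounds the $\ell_1$ distance between $\hat q_{i(t)} = q^{P^{\hat q_{i(t)}},\pi_t}$ and $q^{P,\pi_t}$ by a telescoping sum over layers of the per-state-action transition confidence widths $\eps_{i(t)}(x,a)$ weighted by $q^{P,\pi_t}(x,a)$, i.e. something like $\sum_t \sum_h \sum_{x\in\mcx_{h},a} q^{P,\pi_t}(x,a)\,\eps_{i(t)}(x,a)$ up to an $L$ factor from propagating errors across layers. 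Plugging in $\eps_i(x,a) = \sqrt{2 X_{h+1}\log(TXA/\delta)/\max\{1,N_i(x,a)\}}$, replacing $q^{P,\pi_t}(x,a)$ by the empirical visit indicators up to another concentration error (an Azuma–Hoeffding / Bernstein argument, costing a lower-order term), grouping by epoch with the doubling argument, and applying Cauchy–Schwarz across the $\le XA$ state-action pairs and the layers (where $\sum_h X_h = X$) yields $\sqrt{X}\cdot L \cdot \sqrt{TA\log(TXA/\delta)} \cdot \sqrt{X} / \sqrt{X}$-style bookkeeping that collapses to $O(LX\sqrt{TA\log(TXA/\delta)})$. The delicate points I expect to spend the most effort on are: (a) getting the layer-propagation factor to be exactly $L$ and not $L^{3/2}$ (this is precisely where the improvement over \cite{jaksch2010near-optimal} comes from, and requires carefully using the layered structure rather than a crude diameter bound); (b) correctly handling the $X_{h+1}$ inside $\eps_i$ so that the Cauchy–Schwarz over layers produces a single $X=\sum_h X_h$ rather than $\sum_h X_h^2$; and (c) the bookkeeping that converts the random epoch lengths and visit counts into the clean $\sqrt{T}$ dependence via the doubling schedule. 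Finally I would collect the bounds: \textsc{(i)},\textsc{(iv)} $\le 0$, \textsc{(ii)} $= \wto(L\sqrt{TXA})$, \textsc{(iii)} $= O(LX\sqrt{TA\log(TXA/\delta)})$, and the dominant term gives the claimed $O(LX\sqrt{TA\log(TXA/\delta)})$, all on an event of probability $\ge 1-5\delta$ (one $\delta$ from Lemma \ref{lem:conf-set-P-high-prob}, two from Lemma \ref{lem:conf-set-rew}, and two more from the martingale concentration steps used in terms \textsc{(ii)} and \textsc{(iii)}).
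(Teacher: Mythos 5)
Your outline follows the paper's proof almost exactly: the same optimism decomposition into four terms, the same use of Lemmas \ref{lem:conf-set-P-high-prob} and \ref{lem:conf-set-rew} to kill the optimism and first bias terms, the same occupancy-measure $\ell_1$-difference lemma (the paper's Lemma \ref{lem:key-lem}, imported from \cite{rosenberg2019onlineconvex}) for the transition-error term, and the same Azuma-plus-doubling-pigeonhole treatment of the reward-bias term, with the identical $1-5\delta$ accounting. The only substantive deviation is how you split the two middle terms: the paper uses $\langle \hat q_{i(t)}-q_{i(t)},\hat r_{i(t)}\rangle + \langle q_{i(t)},\hat r_{i(t)}-r\rangle$ (with $q_{i(t)}=q^{P,\pi_t}$ the \emph{true} occupancy measure), whereas you use $\langle \hat q_{i(t)},\hat r_{i(t)}-r\rangle + \langle \hat q_{i(t)}-q^{P,\pi_t},r\rangle$. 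Both are exact identities, and your term (iii) is handled identically to the paper's \textsc{Error} since $\lVert r\rVert_\infty\le 1$.

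The one step that does not go through as you wrote it is term (ii). Your plan to bound $\sum_t\langle \hat q_{i(t)},2b_{i(t)}\rangle$ by "grouping by epoch" and a pigeonhole over visit counts implicitly requires that the weights track the actual visits, i.e.\ that $\sum_t \hat q_{i(t)}(x,a)\approx N(x,a)$. That is true for $q_{i(t)}(x,a)$, whose conditional expectation equals $\mbe[\mbi^{(t)}\{(x,a)\}]$ (this is exactly the martingale in the paper's Lemma \ref{lem:bias-term-1}, followed by the pigeonhole in Lemma \ref{lem:bias-term-2}), but it is false in general for $\hat q_{i(t)}$: the optimistic occupancy measure is computed under the transition function $P^{\hat q_{i(t)}}\in\mcp_{i(t)}$, which can assign much more mass to a state-action pair than the true dynamics ever visit, so the Azuma step has the wrong centering. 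The fix is easy and uses a lemma you already invoke: write $\langle \hat q_{i(t)},b_{i(t)}\rangle=\langle q_{i(t)},b_{i(t)}\rangle+\langle \hat q_{i(t)}-q_{i(t)},b_{i(t)}\rangle$, bound the correction by $\lVert b_{i(t)}\rVert_\infty\,\lVert\hat q_{i(t)}-q_{i(t)}\rVert_1$ and apply Lemma \ref{lem:key-lem} (this costs an extra $\sqrt{\log(TXA/\delta)}$ factor on a term that is already of the target order, so it only affects the hidden logarithms), and then run the martingale/pigeonhole argument on $\langle q_{i(t)},b_{i(t)}\rangle$ exactly as in the paper. Alternatively, just adopt the paper's split in \eqref{thm:final-decomposition}, which places $q_{i(t)}$ in the bias term from the start and avoids the issue entirely.
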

\begin{proof}
Define $q_i \triangleq q^{P, \pi_i}$, i.e., $q_i$ is the occupancy measure induced by the policy $\pi_i$ implemented in epoch $i$ and the true (unknown) transition function $P$. Note that $\hat{q}_i$ and $q_i$ are not the same: they are induced by the same policy $\pi_i$; however, they do not necessarily induce the same transition function. Furthermore, denote by $i(t)$ the index of epoch for round $t \in [T]_+$.
Following \cite{jin2020learning}, the regret defined by Eq. \eqref{eq:regret-occ} can be decomposed into the following four terms: 
\begin{align}\label{thm:final-decomposition}
    \texttt{Reg}_T & = \sum_{t=1}^T \langle q^* - q_{i(t)}, r \rangle \notag \\
    & = \underbrace{\sum_{t=1}^T \langle q^*, r - \hat{r}_{i(t)} \rangle}_{\textsc{Bias-I}} + \underbrace{\sum_{t=1}^T \langle q^* - \hat{q}_{i(t)}, \hat{r}_{i(t)} \rangle}_{\textsc{Diff}} \notag \\
    & \quad + \underbrace{\sum_{t=1}^T \langle \hat{q}_{i(t)} - q_{i(t)}, \hat{r}_{i(t)} \rangle}_{\textsc{Error}} + \underbrace{\sum_{t=1}^T \langle q_{i(t)}, \hat{r}_{i(t)} - r \rangle}_{\textsc{Bias-II}}.
\end{align}
\textsc{Bias-I} and \textsc{Bias-II} are due to the use of a biased estimator $\hat{r}_i$ in the algorithm. \textsc{Diff} represents the difference in solutions to the linear program. \textsc{Error} measures the error in estimating $P$. Each term will be analyzed separately.

Firstly, as a result of Lemma \ref{lem:conf-set-rew}, with probability at least $1 - 2\delta$, $\hat{r}_{i(t)}(x,a)$ is an upper bound of $r(x,a)$ for all $t\in [T]_+$ and all $(x,a) \in \mcx \times \mca$. Therefore, $\textsc{Bias-I} \leq 0$ with probability at least $1 - 2\delta$.
Then, as a result of Lemma \ref{lem:conf-set-P-high-prob}, with probability at least $1-\delta$, $Q^P \subseteq Q^{\mcp_{i(t)}}$, and hence, $q^* \in Q^{\mcp_{i(t)}}$ for all $t\in [T]_+$. By the definition of $\hat{q}_{i(t)}$ in \ref{eq:lp-occ-update} and by optimality, with probability at least $1-\delta$, $\textsc{Diff} \leq 0$. Next, we proceed to bound the remaining terms \textsc{Error} and \textsc{Bias-II}.

\noindent
{\bf Bounding \textsc{Error}:} The term \textsc{Error} is due to the learner's lack of knowledge about transition function $P$ and hence comes from the estimation error for $P$, which is associated with the confidence set $\mcp_i$. The following lemma is the key lemma for bounding \textsc{Error}, the proof of which is in Appendix \ref{ap:proof-error}.

\begin{lemma} \label{lem:key-lem}
With probability at least $1 - 2\delta$, UCRL-LP algorithm ensures that 
\begin{equation}
    \sum_{t=1}^T \lVert \hat{q}_{i(t)} - q_{i(t)} \rVert_1 \leq 12 LX \sqrt{TA \log \left( \frac{TXA}{\delta} \right)}.
\end{equation}
\end{lemma}

Lemma \ref{lem:key-lem} bounds the accumulated difference between occupancy measures induced by the implemented policy over the actual and the estimated transition functions. As a result, \textsc{Error} can be bounded using the following lemma, whose proof uses Lemma \ref{lem:key-lem} and is given in Appendix I. 

\begin{lemma}\label{lem:error}
With probability at least $1 - 2\delta$, 
\begin{equation}
    \textsc{Error} \leq O \left( LX \sqrt{TA \log \left( \frac{TXA}{\delta} \right)} \right).
\end{equation}
\end{lemma}

\noindent 
{\bf Bounding \textsc{Bias-II}:} The term \textsc{Bias-II} represents the accumulated difference between the UCB of $r$ and its expectation. Note that Lemma \ref{lem:conf-set-rew} states that a high-confidence lower bound of $r$ is $\check{r}_{i(t)}$ for all $t\in [T]_+$. Therefore, it suffices to bound $\sum_{t=1}^T \langle q_{i(t)}, \hat{r}_{i(t)} - \check{r}_{i(t)} \rangle$. \textsc{Bias-II} is bounded by the following lemma, the proof of which can be found in Appendix \ref{ap:proof-bias}.

\begin{lemma}\label{lem:bias}
With probability at least $1 - 2\delta$,
\begin{equation}
    \textsc{Bias-II} \leq O \left(\sqrt{TLXA \log \left(\frac{TXA}{\delta}\right)}\right).
\end{equation}
\end{lemma}

Finally, combining Lemmas \ref{lem:error} and \ref{lem:bias} with \eqref{thm:final-decomposition} and using the fact that $\textsc{Bias-I} \leq 0$ and $\textsc{Diff} \leq 0$ completes the proof.    
\end{proof}

\section{Simulation Results}

In this section, we show the simulation results to compare the performance of our UCRL-LP algorithm with the performance of UCRL2 algorithm. The simulation results are presented in Fig. \ref{fig:sim}.
Both algorithms are tested on a layered MDP with 7 layers ($L=6$), 10 states in each of the intermediate layers ($X = 52$), and 5 actions ($A = 5$). In each trial, a new MDP model with such a structure is randomly generated, and two algorithms are tested on this MDP model, respectively. For each value of $T$, 10 trials have been carried out. The solid lines in Fig. \ref{fig:sim} represent the average regrets over 10 trials, and the filled areas represent the range of regrets of 10 trials, blue color for UCRL-LP algorithm and orange color for UCRL2 algorithm, respectively. 
Fig. \ref{fig:sim} shows that both algorithms achieve sublinear regret in terms of $T$. Moreover, our UCRL-LP algorithm achieves significantly better regret than UCRL2 algorithm. 

\begin{figure}[h]
    \centering
    \includegraphics[width=.48\textwidth]{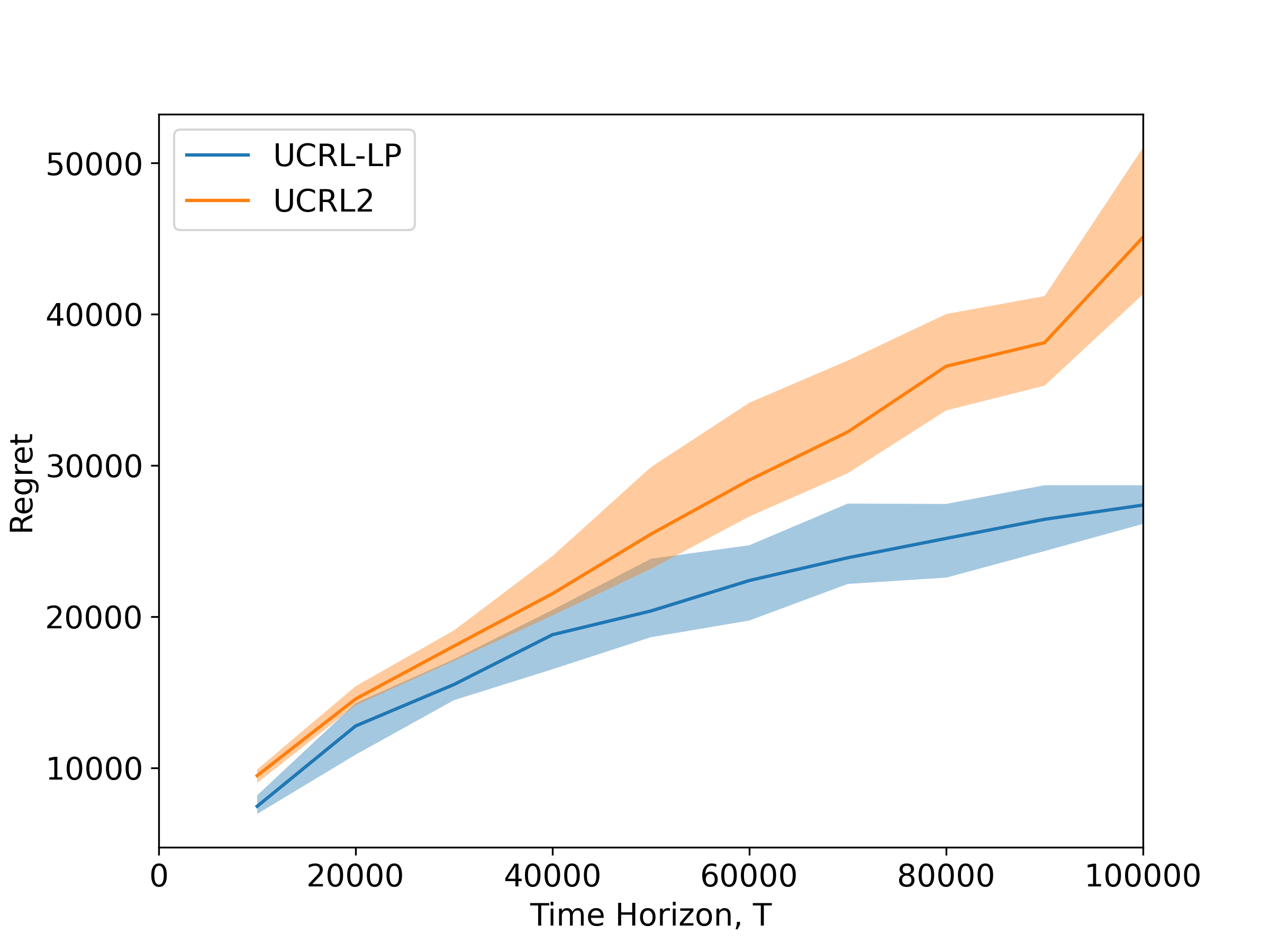}
    \caption{Performance of UCRL-LP and UCRL2 algorithms.}
    \label{fig:sim}
\end{figure}


\section{Conclusion}\label{sec:concl}

In this paper, we devise an efficient algorithm for reinforcement learning in episodic MDP which achieves $\wto(LX \sqrt{TA})$ regret and requires at most $XA \log T$ times of solving a linear program for its policy updates. The algorithm maintains confidence sets and solves linear optimization on polytope of occupancy measure which can be accomplished by standard linear programming solvers. Our contributions include: i) reducing an extra factor of $\sqrt{L}$ in the regret bound compared to the earlier works that deploy confidence set framework, ii) reducing the computational complexity from $O(T)$ to $O(\log T)$ compared to the earlier works that adopt a different method but with a slightly tighter regret bound.
An interesting future research direction is removing a $\sqrt{X}$ factor in the regret bound by combining the occupancy measure framework used in this work with some Q-learning-based algorithm.

\bibliographystyle{IEEEtran}
\bibliography{references}

\appendices

\section{Proof of Lemma \ref{lem:key-lem} and \ref{lem:error}} \label{ap:proof-error}

We follow the main steps in \cite{rosenberg2019onlineconvex} and \cite{jin2020learning} and use additional ideas to prove Lemmas \ref{lem:key-lem} and \ref{lem:error}. Define $P_i \triangleq P^{\hat{q}_i}$, the transition function induced by $\hat{q}_i$ (see Eq. \eqref{eq:lp-occ-update} for the definition of $\hat{q}_i$) in epoch $i$. Define $\xi_i(x,a) \triangleq \lVert P_i(\cdot | x, a) - P(\cdot | x, a) \rVert_1$ for epoch $i$. With slight abuse of notation, define $q(x,a) \triangleq \sum_{x'\in \mcx_{h(x)+1}} q(x,a,x')$ for all $(x,a) \in \mcx \times \mca$ and all occupancy measure $q \in Q$.

\begin{lemma}[Lemma B.1 in \cite{rosenberg2019onlineconvex}]\label{lem:occ-1-norm}
\begin{align}
    & \sum_{t=1}^T \lVert \hat{q}_{i(t)} - q_{i(t)} \rVert_1 \notag \\
    & \quad \leq \sum_{t=1}^T \sum_{h=0}^{L-1} \sum_{x_h\in \mcx_h} \sum_{a_h\in \mca} \lvert \hat{q}_{i(t)}(x_h,a_h) - q_{i(t)}(x_h,a_h) \rvert \notag \\
    & \quad \quad + \sum_{t=1}^T \sum_{h=0}^{L-1} \sum_{x_h\in \mcx_h} \sum_{a_h\in \mca} q_{i(t)}(x_h,a_h) \xi_{i(t)}(x_h,a_h).
\end{align}
\end{lemma}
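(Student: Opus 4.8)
The plan is to establish the bound by decomposing the $\ell_1$ distance $\norm{\hat q_{i(t)} - q_{i(t)}}_1$ triple by triple and peeling off one layer of transition probabilities. The key structural fact I would use is that each of the two occupancy measures factorizes through its own transition function: since $P_i = P^{\hat q_i}$ and $\hat q_i$ is feasible for \eqref{eq:lp-occ-update} (hence lies in $Q^{\mcp_i} \subseteq Q$), the induced transition satisfies $\hat q_i(x,a,x') = \hat q_i(x,a)\,P_i(x'|x,a)$, while $q_i = q^{P,\pi_i} \in Q^P$, so by the representation \eqref{eq:plyh-P-occ-mea} of $Q^P$ we have $q_i(x,a,x') = q_i(x,a)\,P(x'|x,a)$; here $q(x,a) \triangleq \sum_{x'\in\mcx_{h(x)+1}} q(x,a,x')$ as already defined. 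Both identities hold unconditionally (when $q_i(x,a)=0$ or $\hat q_i(x,a)=0$ both sides vanish).

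First I would fix a round $t$, write $i=i(t)$, and expand
\begin{align*}
\norm{\hat q_i - q_i}_1 = \sum_{h=0}^{L-1}\sum_{x\in\mcx_h}\sum_{a\in\mca}\sum_{x'\in\mcx_{h+1}} \bigl| \hat q_i(x,a)\,P_i(x'|x,a) - q_i(x,a)\,P(x'|x,a) \bigr|.
\end{align*}
Then, inside each absolute value I would add and subtract $q_i(x,a)\,P_i(x'|x,a)$ and use the triangle inequality together with $P_i(\cdot|x,a)\ge 0$ and $q_i(x,a)\ge 0$ to obtain the pointwise bound
\begin{align*}
\bigl| \hat q_i(x,a)P_i(x'|x,a) - q_i(x,a)P(x'|x,a) \bigr| \le |\hat q_i(x,a) - q_i(x,a)|\,P_i(x'|x,a) + q_i(x,a)\,|P_i(x'|x,a) - P(x'|x,a)|.
\end{align*}

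Next I would sum over $x'\in\mcx_{h+1}$: in the first term $\sum_{x'\in\mcx_{h+1}}P_i(x'|x,a)=1$, so it collapses to $|\hat q_i(x,a)-q_i(x,a)|$; in the second term $\sum_{x'\in\mcx_{h+1}}|P_i(x'|x,a)-P(x'|x,a)| = \norm{P_i(\cdot|x,a)-P(\cdot|x,a)}_1 = \xi_i(x,a)$, so it becomes $q_i(x,a)\,\xi_i(x,a)$. Summing the resulting inequality over $h$, $x\in\mcx_h$, $a\in\mca$ and then over $t=1,\dots,T$ gives exactly the asserted bound. I do not expect a genuine obstacle in this lemma — it is a one-step ``simulation'' estimate, and the only things to be careful about are keeping the layered index bookkeeping correct ($x\in\mcx_h$, $x'\in\mcx_{h+1}$) and justifying the two factorizations above. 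The real difficulty is deferred to the remainder of the proof of Lemma \ref{lem:key-lem}: the residual term $\sum_{t}\sum_h\sum_{x_h,a_h}|\hat q_{i(t)}(x_h,a_h)-q_{i(t)}(x_h,a_h)|$ must then be unrolled recursively across the $L$ layers, and the accumulated transition-estimation error $\sum_t\sum_{x,a}q_{i(t)}(x,a)\,\xi_{i(t)}(x,a)$ controlled using membership of $P$ and $P_i$ in the confidence set $\mcp_{i(t)}$ (Lemmas \ref{lem:conf-set-polytope} and \ref{lem:conf-set-P-high-prob}) together with a counting argument over epochs.
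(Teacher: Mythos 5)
Your proof is correct; the paper itself imports this lemma from \cite{rosenberg2019onlineconvex} without reproducing a proof, and your argument --- factorizing $\hat q_{i(t)}(x,a,x') = \hat q_{i(t)}(x,a)P_{i(t)}(x'|x,a)$ and $q_{i(t)}(x,a,x') = q_{i(t)}(x,a)P(x'|x,a)$, adding and subtracting the cross term $q_{i(t)}(x,a)P_{i(t)}(x'|x,a)$, and collapsing the sum over $x'$ --- is exactly the standard argument behind Lemma B.1 there. The edge case $\hat q_{i(t)}(x,a)=0$ or $q_{i(t)}(x,a)=0$ is handled correctly, so nothing is missing.
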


\begin{lemma}[Lemma B.2 in \cite{rosenberg2019onlineconvex}]\label{lem:occ-term-1}
For all $t \in [T]_+$ and all $h \in [L-1]_+$, it holds that 
\begin{align}
    & \sum_{x_h\in \mcx_h} \sum_{a_h\in \mca} \lvert \hat{q}_{i(t)}(x_h,a_h) - q_{i(t)}(x_h,a_h) \rvert \notag \\
    & \quad \leq \sum_{s=0}^{h-1} \sum_{x_s \in \mcx_s} \sum_{a_s \in \mca} q_{i(t)}(x_s,a_s) \xi_{i(t)}(x_s,a_s).
\end{align}
\end{lemma}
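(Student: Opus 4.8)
The plan is to fix the round $t$ and work inside its epoch $i=i(t)$, dropping subscripts and writing $\hat q$, $q$, $\pi$, $\xi$ for $\hat q_{i(t)}$, $q_{i(t)}$, $\pi_{i(t)}$, $\xi_{i(t)}$, with $P_i=P^{\hat q}$ the transition induced by $\hat q$ and $P$ the true transition. The first step is to record that $\hat q$ and $q$ are occupancy measures for the \emph{same} policy: since $\hat q\in Q^{\mcp_i}\subseteq Q$ it is a valid occupancy measure, hence (by the bijection between $Q$ and (transition, policy) pairs in the propositions of Section \ref{subsec:prelim}) it equals $q^{P_i,\pi^{\hat q}}=q^{P_i,\pi}$ because the algorithm sets $\pi=\pi^{\hat q}$; and $q=q^{P,\pi}$ by definition. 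Writing $q(x)\triangleq\sum_{a}q(x,a)$ for the state‑visitation probability and using $q(x,a)=q(x)\pi(a|x)$ (likewise for $\hat q$), the $a_h$–sum on the left factors out against $\pi(a_h|x_h)$, so it suffices to bound $\sum_{x_h\in\mcx_h}|\hat q(x_h)-q(x_h)|$.

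Next I set up a one‑layer recursion using the flow identities that define $Q^{P_i}$ and $Q^{P}$: $\hat q(x_h)=\sum_{x_{h-1},a_{h-1}}\hat q(x_{h-1},a_{h-1})P_i(x_h|x_{h-1},a_{h-1})$ and the analogue for $q$ with $P$. Subtracting these and inserting the crossed term $q(x_{h-1},a_{h-1})P_i(x_h|x_{h-1},a_{h-1})$ splits $\hat q(x_h)-q(x_h)$ into a piece proportional to $\hat q(x_{h-1},a_{h-1})-q(x_{h-1},a_{h-1})$ and a piece proportional to $P_i(x_h|\cdot)-P(x_h|\cdot)$. Taking absolute values, summing over $x_h\in\mcx_h$, swapping the order of summation, and using $\sum_{x_h}P_i(x_h|x_{h-1},a_{h-1})=1$ together with $\sum_{x_h}|P_i(x_h|x_{h-1},a_{h-1})-P(x_h|x_{h-1},a_{h-1})|=\xi(x_{h-1},a_{h-1})$ gives
\begin{align*}
\sum_{x_h\in\mcx_h}|\hat q(x_h)-q(x_h)| &\le \sum_{x_{h-1}\in\mcx_{h-1}}\sum_{a_{h-1}\in\mca}|\hat q(x_{h-1},a_{h-1})-q(x_{h-1},a_{h-1})| \\
&\quad + \sum_{x_{h-1}\in\mcx_{h-1}}\sum_{a_{h-1}\in\mca} q(x_{h-1},a_{h-1})\,\xi(x_{h-1},a_{h-1}).
\end{align*}

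Finally I close the argument by induction on $h$. For the base case $h=1$, $\mcx_0=\{x_0\}$ and $\hat q(x_0)=q(x_0)=1$, so the first sum on the right vanishes and only the layer‑$0$ term $\sum_{x_0,a_0}q(x_0,a_0)\xi(x_0,a_0)$ remains, which is exactly the claimed bound with $\sum_{s=0}^{0}$. For the inductive step, substituting the hypothesis for layer $h-1$ into the displayed recursion turns the partial sum $\sum_{s=0}^{h-2}$ into $\sum_{s=0}^{h-1}$, completing the proof. I do not expect a real obstacle; the only subtlety is the opening step — that $\hat q$ and $q$ are induced by the same policy so the $a_h$–sum factors out — which is precisely where the layered structure (Assumption \ref{as:layered-mdp}) and the uniqueness clauses of the occupancy‑measure propositions are used. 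Everything after that is the $\ell_1$ triangle inequality plus bookkeeping with the flow constraints.
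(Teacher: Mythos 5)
Your proposal is correct and follows essentially the same route as the proof the paper relies on (Lemma B.2 of the cited Rosenberg--Mansour work): exploit that $\hat q_{i(t)}$ and $q_{i(t)}$ are occupancy measures for the same policy $\pi_{i(t)}$ under transitions $P^{\hat q_{i(t)}}$ and $P$ respectively, so the action sum factors through $\pi_{i(t)}(a|x)$, and then run a layer-by-layer induction using the flow constraints, the triangle inequality, and $\sum_{x_h}P_i(x_h|x_{h-1},a_{h-1})=1$. The only point worth stating explicitly is the round-trip identity $\hat q = q^{P^{\hat q},\pi^{\hat q}}$ (including at states where $\hat q$ vanishes and the induced $P^{\hat q},\pi^{\hat q}$ are chosen arbitrarily), which follows from the flow constraints in \eqref{eq:plyh-all-occ-mea} and which you correctly flag as the one subtle step.
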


Combining Lemma \ref{lem:occ-1-norm} and Lemma \ref{lem:occ-term-1}, we get 
\begin{align} \label{eq:occ-1-norm-comb}
    & \sum_{t=1}^T \lVert \hat{q}_{i(t)} - q_{i(t)} \rVert_1 \\
    & \quad \leq \sum_{t=1}^T \sum_{h=1}^{L} \sum_{s=0}^{h-1} \sum_{x_s\in \mcx_s} \sum_{a_s\in \mca} q_{i(t)}(x_s,a_s) \xi_{i(t)}(x_s,a_s). \notag 
\end{align}

By Lemma \ref{lem:conf-set-P-high-prob}, with probability at least $1-\delta$, for all $t\in [T]_+$, 
\begin{align}
    \xi_{i(t)}(x,a) & = \lVert P_{i(t)}(\cdot | x, a) - P(\cdot | x, a) \rVert_1 \notag \\
    & \leq \underbrace{\lVert P_{i(t)}(\cdot | x, a) - \bar{P}_{i(t)}(\cdot | x, a) \rVert_1}_{\leq \eps_{i(t)}(x,a) \text{ because } P_{i(t)} \in \mcp_{i(t)}} \notag \\
    & \quad + \underbrace{\lVert P(\cdot | x, a) - \bar{P}_{i(t)}(\cdot | x, a) \rVert_1}_{\leq \eps_{i(t)}(x,a) \text{ w.p. } \geq 1-\delta \text{ by Lemma \ref{lem:conf-set-P-high-prob}}} \notag \\
    & \leq 2 \eps_{i(t)}(x,a).
\end{align}

The following lemma bounds the right-hand side of Eq. \eqref{eq:occ-1-norm-comb}.

\begin{lemma}[Lemma B.3 in \cite{rosenberg2019onlineconvex}] \label{lem:occ-term-2}
With probability at least $1 - 2\delta$, it holds that 
\begin{align}
    & \sum_{t=1}^T \sum_{h=1}^{L} \sum_{s=0}^{h-1} \sum_{x_s\in \mcx_s} \sum_{a_s\in \mca} q_{i(t)}(x_s,a_s) \xi_{i(t)}(x_s,a_s) \notag \\
    & \quad \leq 6 LX \sqrt{2TA \log \left(\frac{TXA}{\delta}\right)} + 2 LX \sqrt{2T \log \left(\frac{L}{\delta}\right)} \notag \\
    & \quad \leq 12 LX \sqrt{TA \log \left( \frac{TXA}{\delta} \right)}.
\end{align}
\end{lemma}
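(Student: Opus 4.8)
The plan is to establish the first of the two displayed bounds; the second then follows at once from $6\sqrt2+2\sqrt2<12$ together with $A\ge1$ and $\log(L/\delta)\le\log(TXA/\delta)$. All of the estimates below are made on the intersection of two events, each of probability at least $1-\delta$: the event of Lemma~\ref{lem:conf-set-P-high-prob}, on which (as established just before the present statement) $\xi_{i(t)}(x,a)\le 2\eps_{i(t)}(x,a)$ for every round $t$ and every pair $(x,a)$; and a second event, described below, on which a certain Azuma--Hoeffding bound holds. Their intersection has probability at least $1-2\delta$. As a first step I would collapse the layer sums: exchanging the order of the $h$- and $s$-summations, for each fixed $s$ there are at most $L$ indices $h$ with $s+1\le h\le L$, so the left-hand side is at most $L\sum_{t=1}^T\sum_{h=0}^{L-1}\sum_{x\in\mcx_h}\sum_{a\in\mca}q_{i(t)}(x,a)\,\xi_{i(t)}(x,a)$, and it remains to bound this inner triple sum by $6X\sqrt{2AT\log(TXA/\delta)}+2X\sqrt{2T\log(L/\delta)}$; afterwards the factor $L$ is absorbed using $L\le X$ (which holds since there are $L+1$ layers, two of them singletons).

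The core step is to replace the \emph{expected} number of visits $q_{i(t)}(x,a)$ to a pair $(x,a)$ in episode $t$ by the \emph{realized} number $m_t(x,a)\in\{0,1\}$ (each layer, hence each $(x,a)$, is entered at most once per episode in a layered MDP). Let $\mc{H}_{t-1}$ denote the history before episode $t$. Then $\mbe[m_t(x,a)\mid\mc{H}_{t-1}]=q_{i(t)}(x,a)$, and for every layer $h$, $\sum_{x\in\mcx_h,a}q_{i(t)}(x,a)=\sum_{x\in\mcx_h,a}m_t(x,a)=1$. Moreover the epoch index $i(t)$, the counters, $\bar P_{i(t)}$, $\mcp_{i(t)}$, and hence the weights $\xi_{i(t)}(x,a)=\lVert P^{\hat q_{i(t)}}(\cdot|x,a)-P(\cdot|x,a)\rVert_1$, are all $\mc{H}_{t-1}$-measurable, with $\xi_{i(t)}(x,a)\le2$. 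Consequently, for each layer $h$ the quantities $Z_t^h=\sum_{x\in\mcx_h,a}\bigl(q_{i(t)}(x,a)-m_t(x,a)\bigr)\xi_{i(t)}(x,a)$ form a martingale-difference sequence with $|Z_t^h|\le2$, and Azuma--Hoeffding plus a union bound over the $L$ layers give, with probability at least $1-\delta$, that the inner triple sum is at most $\sum_{t=1}^T\sum_{h=0}^{L-1}\sum_{x\in\mcx_h,a}m_t(x,a)\,\xi_{i(t)}(x,a)+2X\sqrt{2T\log(L/\delta)}$.

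It remains to bound the realized-count sum. I would substitute $\xi_{i(t)}(x,a)\le2\eps_{i(t)}(x,a)=2\sqrt{2X_{h(x)+1}\log(TXA/\delta)/\max\{1,N_{i(t)}(x,a)\}}$, pull out the logarithmic factor, and invoke three standard facts: (i) the doubling-epoch schedule of Algorithm~\ref{alg:main} forces $N_{i+1}(x,a)\le2N_i(x,a)$, which yields $\sum_{t=1}^T m_t(x,a)/\sqrt{\max\{1,N_{i(t)}(x,a)\}}\le 3\sqrt{N_{T+1}(x,a)}$ for the total count $N_{T+1}(x,a)$; (ii) Cauchy--Schwarz within layer $h$, using $\sum_{x\in\mcx_h,a}N_{T+1}(x,a)=T$, gives $\sum_{x\in\mcx_h,a}\sqrt{N_{T+1}(x,a)}\le\sqrt{X_hAT}$; (iii) AM--GM across layers gives $\sum_{h=0}^{L-1}\sqrt{X_{h+1}X_h}\le\tfrac12\sum_{h=0}^{L-1}(X_h+X_{h+1})\le X$. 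Chaining (i)--(iii) bounds the realized-count sum by $6X\sqrt{2AT\log(TXA/\delta)}$, and combining with the martingale term and the factor $L$ from the first step gives the claimed inequality.

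The main obstacle is this core replacement together with fact (i): one has to verify carefully that the weights $\xi_{i(t)}(x,a)$ are predictable so that the martingale machinery applies, and one has to handle the interaction between the data-dependent (random) epoch lengths and the per-pair visit counts --- in particular the ``count at most doubles within an epoch'' property that underlies the $O(\sqrt{N_{T+1}(x,a)})$ estimate. The remaining steps are summation bookkeeping and textbook inequalities.
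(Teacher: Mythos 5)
Your proposal is correct and follows essentially the same route as the source: the paper itself does not reprove this lemma but imports it as Lemma B.3 of \cite{rosenberg2019onlineconvex}, whose argument is exactly your decomposition into a predictable-weight martingale part (handled by Azuma--Hoeffding with a union bound over layers, costing one $\delta$, on top of the $\delta$ already spent on $\xi_{i(t)}\le 2\eps_{i(t)}$) plus a realized-visit part controlled by the doubling-epoch bound $\sum_i n_i/\sqrt{\max\{1,N_i\}}\le(\sqrt2+1)\sqrt{N}$ and Cauchy--Schwarz across states and layers. The only cosmetic slip is the remark that the outer factor $L$ is ``absorbed using $L\le X$'': that inequality is needed only to write the per-layer Azuma contribution $2L\sqrt{2T\log(L/\delta)}$ as $2X\sqrt{2T\log(L/\delta)}$ (so that multiplying by $L$ yields $2LX\sqrt{2T\log(L/\delta)}$ rather than $2L^2\sqrt{\cdot}$), which your displayed bound already reflects correctly.
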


This completes the proof of Lemma \ref{lem:key-lem}. To prove Lemma \ref{lem:error}, it is easy to see that
\begin{align}
    \textsc{Error} & = \sum_{t=1}^T \langle \hat{q}_{i(t)} - q_{i(t)}, \hat{r}_{i(t)} \rangle \notag \\
    & \leq \sum_{t=1}^T \lVert \hat{q}_{i(t)} - q_{i(t)} \rVert_1 \quad \text{(because $\hat{r}_{i(t)}(x,a) \leq 1$)} \notag \\
    & \leq 12 LX \sqrt{TA \log \left( \frac{TXA}{\delta} \right)},
\end{align}
which completes the proof of Lemma \ref{lem:error}.

\section{Proof of Lemma \ref{lem:bias}} \label{ap:proof-bias}

The following concentration inequality is used in the proof. 

\begin{lemma} [Azuma-Hoeffding ineq., Lemma A.7 in \cite{cesabianchi2006book}] \label{lem:azuma-hoeffding}
Let \(Y_1, Y_2, \cdots\) be a martingale difference sequence and \(\lvert Y_i \rvert \leq c_i\) almost surely for some \(c_i > 0\). For any \(\eps > 0\), 
\begin{equation} \label{eq:azuma-1}
    P\left\{\sum_{i=1}^n Y_i > \eps \right\} \leq \exp \left(-\frac{\eps^2}{2\sum_{i=1}^n c_i^2}\right),
\end{equation}
and 
\begin{equation} \label{eq:azuma-2}
    P\left\{\sum_{i=1}^n Y_i < -\eps \right\} \leq \exp \left(-\frac{\eps^2}{2\sum_{i=1}^n c_i^2}\right).
\end{equation}
\end{lemma}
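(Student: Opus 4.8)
The plan is to prove the upper-tail bound \eqref{eq:azuma-1} by the standard exponential-moment (Chernoff) method and then recover the lower-tail bound \eqref{eq:azuma-2} by symmetry. Let $\mc{F}_0 \subseteq \mc{F}_1 \subseteq \cdots$ be the filtration with respect to which $(Y_i)$ is a martingale difference sequence, so that $\mbe[Y_i \mid \mc{F}_{i-1}] = 0$ and $Y_i$ is $\mc{F}_i$-measurable, and write $S_n = \sum_{i=1}^n Y_i$. For any $\lambda > 0$, Markov's inequality applied to the nonnegative random variable $e^{\lambda S_n}$ gives
\begin{equation*}
    P\{S_n > \eps\} \leq e^{-\lambda \eps} \, \mbe[e^{\lambda S_n}],
\end{equation*}
so the whole task reduces to bounding the moment generating function $\mbe[e^{\lambda S_n}]$ and afterwards optimizing the free parameter $\lambda$.

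The key step, and the main obstacle, is a conditional bound on the per-step moment generating function (Hoeffding's lemma in conditional form): for each $i$,
\begin{equation*}
    \mbe[e^{\lambda Y_i} \mid \mc{F}_{i-1}] \leq \exp\left(\frac{\lambda^2 c_i^2}{2}\right).
\end{equation*}
Since $Y_i \in [-c_i, c_i]$ almost surely, I would bound $e^{\lambda y}$ on this interval by the chord joining its endpoints, namely $e^{\lambda y} \leq \frac{c_i - y}{2 c_i} e^{-\lambda c_i} + \frac{c_i + y}{2 c_i} e^{\lambda c_i}$, which holds by convexity of $y \mapsto e^{\lambda y}$. Taking conditional expectations and invoking $\mbe[Y_i \mid \mc{F}_{i-1}] = 0$ eliminates the term linear in $Y_i$ and leaves $\mbe[e^{\lambda Y_i}\mid \mc{F}_{i-1}] \leq \frac{1}{2}(e^{-\lambda c_i} + e^{\lambda c_i}) = \cosh(\lambda c_i)$. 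The claim then reduces to the elementary inequality $\cosh(u) \leq e^{u^2/2}$, which I would verify by a term-by-term comparison of the two power series using $(2k)! \geq 2^k k!$. This curvature estimate is the only genuinely delicate ingredient; everything else is bookkeeping.

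Given the per-step bound, I would peel off the terms of $S_n$ one at a time via the tower property. Conditioning on $\mc{F}_{n-1}$ and pulling out the $\mc{F}_{n-1}$-measurable factor $e^{\lambda S_{n-1}}$,
\begin{equation*}
    \mbe[e^{\lambda S_n}] = \mbe\left[e^{\lambda S_{n-1}} \, \mbe[e^{\lambda Y_n} \mid \mc{F}_{n-1}]\right] \leq e^{\lambda^2 c_n^2/2} \, \mbe[e^{\lambda S_{n-1}}],
\end{equation*}
and iterating down to $i = 1$ yields $\mbe[e^{\lambda S_n}] \leq \exp\left(\frac{\lambda^2}{2}\sum_{i=1}^n c_i^2\right)$. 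Substituting this into the Chernoff bound gives $P\{S_n > \eps\} \leq \exp\left(-\lambda\eps + \frac{\lambda^2}{2}\sum_{i=1}^n c_i^2\right)$, and minimizing the exponent over $\lambda > 0$ at $\lambda^\star = \eps / \sum_{i=1}^n c_i^2$ produces exactly \eqref{eq:azuma-1}. Finally, since $(-Y_i)$ is also a martingale difference sequence with $\lvert -Y_i \rvert \leq c_i$, applying the upper-tail result to $-Y_i$ and noting $\{-S_n > \eps\} = \{S_n < -\eps\}$ delivers \eqref{eq:azuma-2}, completing the proof.
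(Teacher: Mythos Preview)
Your proof is correct and is the standard Chernoff-method argument for the Azuma--Hoeffding inequality. Note, however, that the paper does not give its own proof of this lemma: it is stated as a cited result (Lemma~A.7 in \cite{cesabianchi2006book}) and used as a black box in Appendix~\ref{ap:proof-bias}, so there is no in-paper proof to compare against. Your argument is precisely the textbook proof one would find in the cited reference.
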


By Lemma \ref{lem:conf-set-rew}, with probability at least $1 - \delta$, for all $t \in [T]_+$, $\hat{r}_{i(t)}(x,a) - r(x,a) \leq 2b_{i(t)}(x,a)$.
Same as before, let $q(x,a) = \sum_{x'\in \mcx_{h(x)+1}} q(x,a,x')$ for all $(x,a) \in \mcx \times \mca$ and all occupancy measure $q \in Q$.
Therefore, with probablity at least $1 - \delta$, 
\begin{align}
    & \textsc{Bias-II} \notag \\
    = & \sum_{t=1}^T \langle q_{i(t)}, \hat{r}_{i(t)} - r \rangle \notag \\
    \leq & 2 \sum_{t=1}^T \sum_{h=0}^{L-1} \sum_{x_h\in \mcx_h} \sum_{a_h\in \mca} q_{i(t)}(x_h,a_h) b_{i(t)}(x_h,a_h) \notag \\
    = & 2 \sum_{t=1}^T \sum_{h=0}^{L-1} \sum_{x_h\in \mcx_h} \sum_{a_h\in \mca} \notag \\
    & \qquad \qquad (q_{i(t)}(x_h,a_h) - \mbi^{(t)}\{(x_h, a_h)\}) b_{i(t)}(x_h,a_h) \notag \\
    & + 2 \sum_{t=1}^T \sum_{h=0}^{L-1} \sum_{x_h\in \mcx_h} \sum_{a_h\in \mca} \mbi^{(t)}\{(x_h, a_h)\} \cdot b_{i(t)}(x_h,a_h). \label{eq:bias-decomp}
\end{align}
    
We will bound the two terms separately. Lemma \ref{lem:bias-term-1} below bounds the first term.

\begin{lemma} \label{lem:bias-term-1}
With probability at least $1 - \delta$,  
\begin{align}
    & \sum_{t=1}^T \sum_{h=0}^{L-1} \sum_{x_h\in \mcx_h} \sum_{a_h\in \mca} \notag \\
    & \qquad (q_{i(t)}(x_h,a_h) - \mbi^{(t)}\{(x_h, a_h)\}) b_{i(t)}(x_h,a_h) \notag \\
    & \qquad \qquad \qquad \qquad \qquad \leq 2L \sqrt{T} \log \left(\frac{TXA}{\delta}\right).
\end{align}
\end{lemma}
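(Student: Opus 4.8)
The plan is to fix a layer $h \in [L-1]$, recognise the inner sum over rounds as a martingale, and then union-bound over the $L$ layers. For $t \in [T]_+$ put
\[
Z^{(h)}_t \triangleq \sum_{x_h\in\mcx_h}\sum_{a_h\in\mca}\bigl(q_{i(t)}(x_h,a_h)-\mbi^{(t)}\{(x_h,a_h)\}\bigr)\,b_{i(t)}(x_h,a_h),
\]
and let $\mc{F}_{t-1}$ denote the history before round $t$. I would first record that $i(t)$, $\pi_{i(t)}$, $\hat q_{i(t)}$, $q_{i(t)}=q^{P,\pi_{i(t)}}$ and $b_{i(t)}$ are all $\mc{F}_{t-1}$-measurable: a new epoch is triggered only by counters available strictly before round $t$, and $P$ is deterministic, so $q_{i(t)}$ is a fixed function of the predictable policy $\pi_{i(t)}$ and $b_{i(t)}$ depends only on the visit counts frozen at the start of the epoch. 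Since the probability of visiting $(x_h,a_h)$ in layer $h$ under $P$ and $\pi_{i(t)}$ equals $q_{i(t)}(x_h,a_h)$ by the definition of the occupancy measure, we get $\mbe[\mbi^{(t)}\{(x_h,a_h)\}\mid\mc{F}_{t-1}]=q_{i(t)}(x_h,a_h)$, hence $\mbe[Z^{(h)}_t\mid\mc{F}_{t-1}]=0$ and $\{Z^{(h)}_t\}_{t=1}^T$ is a martingale difference sequence.

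The crux is a tight almost-sure bound $|Z^{(h)}_t|\le\sqrt{2\log(TXA/\delta)}$. Let $(x_h^\star,a_h^\star)$ be the unique pair visited in layer $h$ during round $t$, and split
\[
Z^{(h)}_t=\underbrace{\sum_{(x_h,a_h)\neq(x_h^\star,a_h^\star)}q_{i(t)}(x_h,a_h)\,b_{i(t)}(x_h,a_h)}_{\ge 0}\;+\;\bigl(q_{i(t)}(x_h^\star,a_h^\star)-1\bigr)\,b_{i(t)}(x_h^\star,a_h^\star).
\]
From the definition \eqref{eq:extra-term-r}, $b_{i(t)}(x,a)\le\sqrt{2\log(TXA/\delta)}$, and $\sum_{x_h,a_h}q_{i(t)}(x_h,a_h)\le 1$, so the first term lies in $[0,\sqrt{2\log(TXA/\delta)}]$ while the second lies in $[-\sqrt{2\log(TXA/\delta)},0]$; as they have opposite signs, $|Z^{(h)}_t|\le\sqrt{2\log(TXA/\delta)}$. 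A bare triangle inequality would instead give $2\sqrt{2\log(TXA/\delta)}$ and ruin the target constant, so this observation is essential.

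Finally I would apply the Azuma--Hoeffding inequality (Lemma \ref{lem:azuma-hoeffding}) with $c_t=\sqrt{2\log(TXA/\delta)}$ and $n=T$, giving $\mathbb{P}\{\sum_{t=1}^T Z^{(h)}_t>\eps\}\le\exp\bigl(-\eps^2/(4T\log(TXA/\delta))\bigr)$; taking $\eps=2\sqrt{T\log(TXA/\delta)\log(L/\delta)}$ makes the right-hand side at most $\delta/L$. A union bound over the $L$ layers $h\in[L-1]$ then yields, with probability at least $1-\delta$,
\[
\sum_{t=1}^T\sum_{h=0}^{L-1}Z^{(h)}_t\le 2L\sqrt{T\log(TXA/\delta)\log(L/\delta)}\le 2L\sqrt{T}\,\log\!\left(\frac{TXA}{\delta}\right),
\]
where the last step uses $\log(L/\delta)\le\log(TXA/\delta)$ (valid since $X>L$). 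The only real obstacles are the sharp bound on $|Z^{(h)}_t|$ above and the predictability bookkeeping confirming that $q_{i(t)}$ and $b_{i(t)}$ are $\mc{F}_{t-1}$-measurable; the remainder is a standard concentration-plus-union-bound computation.
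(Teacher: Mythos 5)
Your proof is correct and follows essentially the same route as the paper's: fix a layer, observe that the round-indexed sums form a bounded martingale difference sequence (with the same opposite-signs observation giving the factor-one bound on $|Z^{(h)}_t|$ rather than two), apply Azuma--Hoeffding with failure probability $\delta/L$, and union-bound over the $L$ layers. The only cosmetic difference is that the paper factors out $\sqrt{2\log(TXA/\delta)}$ and works with the normalized increments $|Y_t|\le 1$, whereas you keep that factor inside $c_t$; the resulting bound is identical.
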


\begin{proof}
Fix some layer $h \in [L-1]$. Consider the following term: 
\begin{align*}
    & \sum_{x_h\in \mcx_h} \sum_{a_h\in \mca} (q_{i(t)}(x_h,a_h) - \mbi^{(t)}\{(x_h, a_h)\}) b_{i(t)}(x_h,a_h) \\
    & = \sum_{x_h\in \mcx_h} \sum_{a_h\in \mca} \frac{q_{i(t)}(x_h,a_h) - \mbi^{(t)}\{(x_h, a_h)\}}{\sqrt{\max\{1, N_{i(t)}(x_h,a_h)\}}} \notag \\ 
    & \qquad \qquad \qquad \qquad \qquad \qquad \qquad \cdot \sqrt{2 \log \left( \frac{TXA}{\delta}\right)}.
\end{align*}
Let $Y_t = \sum_{x_h\in \mcx_h} \sum_{a_h\in \mca} \frac{q_{i(t)}(x_h,a_h) - \mbi^{(t)}\{(x_h, a_h)\}}{\sqrt{\max\{1, N_{i(t)}(x_h,a_h)\}}}$. Hence, $Y_1, \cdots, Y_T$ is a martingale difference sequence. Moreover, $-1 \leq Y_t \leq 1$ and $|Y_t| \leq 1$ for all $t\in [T]_+$. We apply Azuma-Hoeffding inequality (Lemma \ref{lem:azuma-hoeffding}) and will get  
\begin{align}
    & \sum_{t=1}^T \sum_{x_h\in \mcx_h} \sum_{a_h\in \mca} \frac{q_{i(t)}(x_h,a_h) - \mbi^{(t)}\{(x_h, a_h)\}}{\sqrt{\max\{1, N_{i(t)}(x_h,a_h)\}}} \notag \\
    & \qquad \qquad \qquad \qquad \qquad \qquad \leq \sqrt{2 T \log \left(\frac{L}{\delta}\right)} 
\end{align}
with probability at least $1 - \delta/L$.
Taking a union bound for all $h\in [L-1]$ completes the proof. 
\end{proof}

Lemma \ref{lem:bias-term-2} below bounds the second term in Eq. \eqref{eq:bias-decomp}.

\begin{lemma} \label{lem:bias-term-2}
It holds that 
\begin{align}
    & \sum_{t=1}^T \sum_{h=0}^{L-1} \sum_{x_h\in \mcx_h} \sum_{a_h\in \mca} \mbi^{(t)}\{(x_h, a_h)\} \cdot b_{i(t)}(x_h,a_h) \notag \\
    & \qquad \qquad \qquad \qquad \leq 5 \sqrt{TLXA \log \left(\frac{TXA}{\delta}\right)}.
\end{align}
\end{lemma}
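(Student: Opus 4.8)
The plan is to bound the double-counting sum by relating the indicator-weighted bonuses to the cumulative visit counts, and then apply a standard "sum of $1/\sqrt{N}$" pigeonhole argument. First I would observe that, by definition of $b_{i(t)}$ and the fact that the trajectory visits exactly one $(x_h,a_h)$ pair in layer $h$ of round $t$, the inner double sum $\sum_{x_h,a_h}\mbi^{(t)}\{(x_h,a_h)\}\,b_{i(t)}(x_h,a_h)$ equals $\sqrt{2\log(TXA/\delta)}\big/\sqrt{\max\{1,N_{i(t)}(x_h^{(t)},a_h^{(t)})\}}$, where $(x_h^{(t)},a_h^{(t)})$ is the pair actually visited. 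Summing over $h$ and $t$, I would therefore need to bound $\sum_{t=1}^T\sum_{h=0}^{L-1} 1/\sqrt{\max\{1,N_{i(t)}(x_h^{(t)},a_h^{(t)})\}}$.

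The key step is to pass from the epoch-start count $N_{i(t)}(x,a)$ to the running count. The doubling-epoch schedule guarantees that within an epoch the running count at most doubles, so $N_{i(t)}(x,a)$ is within a factor of $2$ of the cumulative number of visits to $(x,a)$ strictly before round $t$; this lets me replace $N_{i(t)}$ by the true running visit count up to a constant $\sqrt{2}$ factor (handling the "count equals $0$" base case separately, which contributes at most $O(LXA)$, a lower-order term). Then, grouping the terms by the pair $(x,a)$ and letting $N_T(x,a)$ be its total visit count over all $T$ episodes, the sum becomes $\sum_{(x,a)}\sum_{k=1}^{N_T(x,a)} 1/\sqrt{k} \le \sum_{(x,a)} 2\sqrt{N_T(x,a)}$. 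Since $\sum_{(x,a)} N_T(x,a) = \sum_{h=0}^{L-1}\sum_{t=1}^T 1 \cdot (\text{one visit per layer}) $ — more precisely $\sum_{(x,a)}N_T(x,a)=LT$ because each of the $T$ episodes contributes exactly one visited pair in each of the $L$ layers $h\in[L-1]$ — Cauchy–Schwarz over the at most $XA$ pairs gives $\sum_{(x,a)}\sqrt{N_T(x,a)} \le \sqrt{XA}\sqrt{\sum_{(x,a)}N_T(x,a)} = \sqrt{XA}\sqrt{LT}$. Multiplying back by the $\sqrt{2\log(TXA/\delta)}$ factor and the constants from the doubling argument yields the claimed $O\big(\sqrt{TLXA\log(TXA/\delta)}\big)$ bound, with the explicit constant $5$ absorbing the $\sqrt{2}$'s and the lower-order $0$-count term.

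The main obstacle I anticipate is making the doubling-schedule argument precise: one must carefully argue that replacing $N_{i(t)}(x,a)$ (the count frozen at the start of the current epoch) by the live running count costs only a constant factor, using the fact that an epoch ends as soon as some pair's within-epoch count matches its epoch-start count, hence no pair's count more than doubles within an epoch. This is a well-known technique from UCRL2 \cite{jaksch2010near-optimal}, but the bookkeeping (indexing epochs, handling the first visit to each pair, and making sure the constant works out to $5$) is the delicate part; everything else is the routine $\sum 1/\sqrt{k} = O(\sqrt{N})$ estimate plus Cauchy–Schwarz.
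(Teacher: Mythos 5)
Your proposal is correct and follows essentially the same route as the paper: the paper invokes the UCRL2 inequality $\sum_{i=1}^m n_i(x,a)/\sqrt{N_i(x,a)} \leq (\sqrt{2}+1)\sqrt{N_m(x,a)}$, which is exactly the packaged form of your doubling-schedule-plus-$\sum_k 1/\sqrt{k}$ pigeonhole argument, and then aggregates over pairs and layers by Cauchy--Schwarz (applied per layer and then over layers, rather than in one global step as you do, but the two are equivalent). No gap to report.
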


\begin{proof}
Firstly, let $m$ denote the number of epochs. It has been shown in \cite{jaksch2010near-optimal} that 
\begin{equation*}
    \sum_{i=1}^m \frac{n_i(x,a)}{\sqrt{N_i(x,a)}} \leq (\sqrt{2}+1) \sqrt{N_m(x,a)} \leq 3 \sqrt{N_m(x,a)},
\end{equation*}
where $n_i(x,a)$ is the number of visits to $(x,a)$-pair in epoch $i$.
By Jensen's inequality we have 
\begin{align*}
    & \sum_{x_h\in \mcx_h} \sum_{a_h\in \mca} \sum_{i=1}^m \frac{n_i(x_h,a_h)}{\sqrt{N_i(x_h,a_h)}} \\
    \leq & 3 \sum_{x_h\in \mcx_h} \sum_{a_h\in \mca} \sqrt{N_m(x_h,a_h)} \\
    \leq & 3 \sqrt{X_h A \sum_{x_h\in \mcx_h} \sum_{a_h\in \mca} N_m(x_h,a_h)} \\
    = & 3 \sqrt{T X_h A}.
\end{align*}
Therefore, 
\begin{align}
    & \sum_{t=1}^T \sum_{h=0}^{L-1} \sum_{x_h\in \mcx_h} \sum_{a_h\in \mca} \mbi^{(t)}\{(x_h, a_h)\} \cdot b_{i(t)}(x_h,a_h) \notag \\
    = & \sum_{t=1}^T \sum_{h=0}^{L-1} \sum_{x_h\in \mcx_h} \sum_{a_h\in \mca} \frac{\mbi^{(t)}\{(x_h, a_h)\}}{\sqrt{\max\{1, N_{i(t)}(x_h,a_h)\}}} \notag \\
    & \qquad \qquad \qquad \qquad \qquad \qquad \qquad \cdot \sqrt{2 \log \left(\frac{TXA}{\delta}\right)} \notag \\
    = & \sum_{h=0}^{L-1} \sum_{x_h\in \mcx_h} \sum_{a_h\in \mca} \sum_{i=1}^m \frac{n_i(x_h,a_h)}{\sqrt{\max\{1, N_i(x_h,a_h)\}}} \notag \\
    & \qquad \qquad \qquad \qquad \qquad \qquad \qquad \cdot \sqrt{2 \log \left(\frac{TXA}{\delta}\right)} \notag \\
    \leq & 3 \sqrt{2 \log \left(\frac{TXA}{\delta}\right)} \sum_{h=0}^{L-1} \sqrt{T X_h A} \notag \\
    \leq & 3 \sqrt{2 TLXA \log \left(\frac{TXA}{\delta}\right)} \quad \text{(by Jensen's inequality)} \notag \\
    \leq & 5 \sqrt{TLXA \log \left(\frac{TXA}{\delta}\right)}.
\end{align}
\end{proof}

Combining Lemma \ref{lem:bias-term-1} and Lemma \ref{lem:bias-term-2} finishes the proof.

\end{document}